\documentclass[accepted]{uai2025}

\usepackage[american]{babel}

\usepackage{natbib} 
\bibliographystyle{plainnat}

\usepackage{mathtools} 

\usepackage{booktabs} 
\usepackage{tikz} 
\usepackage{amsmath, amsthm, hyperref, amsfonts, algorithm, algpseudocodex}
\usepackage{siunitx}

\newtheorem{theorem}{Theorem}
\newtheorem{lemma}{Lemma}
\newtheorem{assumption}{Assumption}
\newtheorem{definition}{Definition}
\newtheorem{remark}{Remark}

\newcommand{\converge}[1][]{\xrightarrow{#1}}

\newcommand{\R}{\mathbb{R}}

\newcommand{\pr}{\mathbb{P}}
\newcommand{\E}{\mathbb{E}}

\DeclareMathOperator*{\argmin}{argmin}

\title{Targeted Learning for Variable Importance}

\author[1]{\href{mailto:<xw547@cornell.edu>?Subject=Your UAI 2025 paper}{Xiaohan Wang}{}}
\author[2]{Yunzhe Zhou}
\author[3]{Giles Hooker}

\affil[1]{
Department of Statistics and Data Science\\
Cornell University\\
Ithaca, New York, USA
}
\affil[2]{
Department of Biostatistics\\
University of California, Berkeley\\
Berkeley, California, USA
}
\affil[3]{
Department of Statistics and Data Science\\
University of Pennsylvania\\
Philadelphia, Pennsylvania, USA
  }
  
\begin{document}
\maketitle

\begin{abstract}
  Variable importance is one of the most widely used measures for interpreting machine learning with significant interest from both statistics and machine learning communities. However, attention has only recently been directed toward uncertainty quantification in these metrics. Current approaches largely rely on one-step procedures, which, while asymptotically efficient, can present higher sensitivity and instability in finite sample settings. To address these limitations, we propose a novel method by employing the {\it targeted learning} (TL) framework, designed to enhance robustness in inference for variable importance metrics. Our approach is particularly suited for conditional permutation variable importance. We show that it (i) retains the asymptotic efficiency of traditional methods, (ii) maintains comparable computational complexity, and (iii) delivers improved accuracy, especially in finite sample contexts. We further support these findings with numerical experiments that illustrate the practical advantages of our method and validate the theoretical results.
\end{abstract}

\section{Introduction}
Machine Learning (ML) models offer high-quality predictions for complex data structures and have become indispensable across various fields, including civil engineering \citep{lu2023using}, sociology \citep{molina2019machine}, and archaeology \citep{bickler2021machine}, due to their versatility and predictive power. However, due to their complexity, humans find the internal structures of ML models challenging to turn into real-world interpretation \citep{hooker2017machine, hooker2021unrestricted, freiesleben2024scientific}. To address this, a considerable suite of post hoc interpretable machine learning (IML) tools have been developed. 

Among these tools, variable importance, which measures the contribution of individual covariates to the response variable, is a widely adopted measure in IML \citep{molnar2020interpretable}. Traditionally, this has been applied to assess the behavior of fixed models, such as random forests \citep{breiman2001random} and linear models \citep{gromping2007estimators}. Additionally, efforts have been made to create model-specific uncertainty quantification methods, as seen in \citet{gan2022model}. Building on these advances, there is a growing interest in exploring model-agnostic variable importance using nonparametric techniques \citep{van2006statistical, lei2018distribution, williamson2021nonparametric, donnelly2023rashomon,verdinelli2024decorrelated}.

Despite substantial efforts devoted to developing new methodologies, little attention has been given to fully understanding these tools. Specifically, there are few methodological developments around  uncertainty quantification for variable importance metrics. Some recent work has started to develop such methods, \citep{williamson2021nonparametric, williamson2023general,wolock2023nonparametric,freiesleben2024scientific,fauvel2025sobol}. However, these have focused on utilizing one-step de-biasing procedures.

In this paper, we introduce a novel method to quantify the uncertainty of variable importance metrics. Employing the targeted learning framework of \citet{van2006statistical}, our method provides a robust algorithm for conducting inference on variable importance. Our approach is statistically efficient within the class of regular estimators as well as computationally cheap. Particularly, we focus on conditional permutation importance, as this method avoids potential issues with extrapolation \citep{hooker2021unrestricted}.

This paper is organized as follows: In Section \ref{sec: variable importance}, we formally state the problem setup and introduce some key concepts related to variable importance. In Section \ref{sec: methodology}, we give an overview of the existing methodology and its justification, present our methodology, and illustrate it using conditional permutation importance. In Section \ref{sec: theory}, we introduce the efficiency theory and the theoretical guarantees of our methodology. Lastly, we illustrate the effectiveness of our method through simulation studies and two real-world data applications. We make our code publicly available at \url{https://github.com/xw547/TL4VI}.

\section{Variable Importance}\label{sec: variable importance}
\subsection{Problem Setup}
Suppose that we observe \(n\) independent and identically distributed (i.i.d.) observations \( \{(Y_i, X_i, Z_i)\}_{i=1}^{n} \) drawn from the unknown joint distribution \( P^*_{Y, X, Z} \in \mathcal{M}\), where \(\mathcal{M}\) is the class of nonparametric distributions. That is, 
\[
(Y_i, X_i, Z_i) \overset{\text{i.i.d.}}{\sim} P_{Y,X,Z}, \quad i = 1, \dots, n.
\]

We aim to investigate the relationship between the response \( Y \in \R\) and the covariate of interest \( X  \in \mathcal{X}\) in the presence of other covariates \( Z \in \mathcal{Z} \) through some predefined variable importance, make sure it is ``efficient'', and then conduct inference. We define variable importance with respect to performance on some loss \(L(\cdot)\),  but our estimator  \(\hat{f}: \mathcal{X} \times \mathcal{Z} \to \mathbb{R}\) is assumed to approximate \(E(Y|X,Z)\). For simplicity, we'll focus on the case where \(\mathcal{X}\subseteq \R, \mathcal{Z} \subseteq \R^{d-1}\), yet we note that our method can be generalized to the cases where \(X_i\) is a vector. 

\subsection{Notation}
We use \(\pr_n\) to denote the empirical measure, that is, suppose \(f: \mathcal{X} \to \R \), \(\pr_n(f) = \frac{1}{n}\sum_{i=1}^n f(X_i)\). In contrast, we use \(\pr\) to denote the probability measure, that is, \(\pr(f) = \int f(X) d\pr\). And \(L_2^0(P)\) denotes the collection of functions such that \(P f =0 \) and \(Pf^2 < \infty\). \(O_P\) and \(o_P\) are used as follows: \(X_n = O_P(r_n)\) denotes \(X_n/r_n\) is bounded in probability and \(X_n = o_P(r_n)\) indicates \(X_n/r_n \converge[P] 0\), respectively. Lastly, we denote the \(L_2(P)\) norm as \(\|\cdot\|\).

\subsection{Variable Importance}\label{sec: vi}

A number of variable importance metrics have been suggested. Here we include a brief description of some of the most commonly studied. 

\subsubsection{Permutation Importance}
Variable importance is obtained by considering the out-of-bag (OOB) loss of a certain feature \citep{breiman2001random}. First, we permute the feature(s) that we are interested in quantifying the importance. That is, we randomly permute the index of the column of \(X\), denoted by \(X^\pi\), and then put it together with the remaining features, which results in the final data \((Y, X^\pi, Z)\). Then, for model \(\hat{f}: \mathcal{X} \times \mathcal{Z} \to \mathbb{R}\):
\begin{align*}
  VI^{\pi}_X  = \frac{1}{N}\sum_{i=1}^N L(Y_i, \hat{f}(X_i^\pi, Z_i) - L(Y_i, \hat{f}(X_i, Z_i))).
\end{align*}

While providing a starting point, this metric has been critiqued in \citet{strobl2008conditional,hooker2021unrestricted} as resulting in extrapolation when $(X_i^\pi, Z_i)$ are far from observed data.

\subsubsection{Conditional Permutation Importance}
This metric is obtained by conditional permutation copy of \(X\) such that: \(X_{i}^C \sim X_i|Z_i\), \(X_i^C\perp Y_i|Z_i\). With a similar notation defined above, we may thus have the plug-in estimator, defined as:
\begin{align*}
VI_X^C  = \frac{1}{N}\sum_{i=1}^N L(Y_i, \hat{f}(X_i^C, Z_i)) - L(Y_i, \hat{f}(X_i, Z_i)).
\end{align*}

This approach was first proposed by \cite{strobl2008conditional} for the random forest, where they obtain the conditional permuted version by conducting the permutation within each leaf. A similar idea is also present in \citet{fisher2019all,chamma2024statistically}. \citet{hooker2021unrestricted} observes that both conditional permutation, as well as Leave-One-Covariate-Out (LOCO) and other retraining methods, have the same population estimand that serves as our target. 

\subsubsection{Leave-One-Covariate-Out}
LOCO can be considered a nonparametric extension of the classical \(R^2\) statistic \citep{williamson2023general}. In addition to \(\hat{f}\), we training another model \(\hat{f}_{-X}:\mathcal{Z} \to \mathbb{R}\), which has no access to \(X\). The plug-in estimator is defined as:
\begin{align*}
VI_X^d  = \frac{1}{N}\sum_{i=1}^N L(Y_i, \hat{f}_{-X}(Z_i) - L(Y_i, \hat{f}(X_i, Z_i))).
\end{align*}
This approach was first proposed by \cite{lei2018distribution}, and \cite{williamson2023general} quantified the uncertainty of the method through the efficient influence function. See \citet{mentch2022getting} for cautionary results.

\section{Methodology}\label{sec: methodology}
Existing literature on quantifying the uncertainty of variable importance is scarce. There are two main trends in the uncertainty quantification of IML.

The first is a de-biasing approach utilizing the influence function, in which a bias correction and confidence intervals are constructed from a one-step method \citep{williamson2023general,wolock2023nonparametric}. In \citet{williamson2021nonparametric} and \citet{williamson2023general}, the efficient influence function is leveraged to construct confidence intervals, since the plug-in estimator is shown to be efficient under mild assumptions. While \citet{wolock2023nonparametric} applies this concept to de-bias the variable importance for survival analysis, and then construct the confidence interval, an approach also seen in \citet{ning2017general,10.1111/ectj.12097}. 

Alternatively, \citet{molnar2023relating} and \citet{freiesleben2024scientific} propose a bootstrap-like approach for uncertainty quantification. In these methods, models are refitted on different subsets of the data, and the variance is estimated from the ensemble of models and their associated metrics. This approach is conceptually similar to the bootstrap variance estimation described by \citet{diciccio1996bootstrap}. However, these bootstrap-based methods require significant computational effort.

In the following subsections, we first review efficient influence function. Next, we briefly introduce the theory behind the de-biasing approach. We then formally present the proposed methodology within the targeted learning framework. Lastly, we present an implementation of the proposed methodology.

\subsection{Efficient Influence Function}
Influence functions characterize the first-order behavior of pathwise differentiable functionals. By naively appending the empirical estimator of the influence function, we can ``de-bias'' the estimator, which will be discussed in detail in the next section. 

\begin{definition}\label{definition: pairwise differentiable}
Let $\mathcal{M}$ be a class of probability distributions and $\Psi: \mathcal{M} \to \mathbb{R}$ be a functional. We say that $\Psi$ is pathwise differentiable at $P^* \in \mathcal{M}$ with tangent space \(\dot{P}_0\) if there exists a bounded linear function ${\psi}_{P^*}$, called the influence function, such that for \(P_{\varepsilon,g} = (1 + \varepsilon) P^* + \varepsilon g \in \dot{P^*}\) , the following holds:
\[
\left. \frac{d}{d\epsilon} \Psi(P_{\epsilon, g}) \right|_{\epsilon = 0} = \mathbb{E}_{P^*} \left[ {\psi}_{P^*}(X) \cdot \frac{d}{d\epsilon} \log \frac{dP_{\epsilon, g}}{dP^*}(X) \bigg|_{\epsilon = 0} \right].
\]
\end{definition}

Following \cite{hines2022demystifying}, for many targets $\Psi$ the influence function can be calculated from a G\^{a}teaux derivative in the direction of a point-mass contamination at each $x$:
\[
\psi_P(x) = \frac{d}{d\epsilon}  \Psi\left( (1-\epsilon)P + \epsilon \delta_{X=x} \right).
\]

For distributions \(\hat{P}, P^* \in \mathcal{M}\), if \(\Psi\) is a pathwise differentiable functional, we can consider the von Mises expansion:
\begin{align}\label{equation: von Mises}
  \Psi(\hat{P}) - \Psi(P^*) 
  &= \int \psi_{\hat{P}}(x) d(\hat{P} - P^*)(x) + R_2(\hat{P}, P^*),
\end{align}
where \(R_2(\hat{P}, P^*)\) is the second-order remainder term. Intuitively, the von Mises expansion can act as a distribution version of a Taylor expansion. In particular, if \(\mathcal{M}\) is the class of nonparametric distributions, the influence function \(\psi\) is also the efficient influence function \citep{hines2022demystifying, kennedy2022semiparametric}. The efficient influence function characterizes the optimal attainable asymptotic variance (See lemma 25.19 of \cite{van2000asymptotic} and theorem 5.2.1 of \cite{bickel1993efficient}.

\subsection{De-biasing approach}
Based on the above explanation, a natural idea would be to seek a bias correction. A naive bias correction estimator is:
\begin{align*}
  \hat\Psi_{naive} = \Psi(\hat{P}) + \pr_n ( \psi_{\hat{P}}).
\end{align*}

With the same von Mises expansion, we may then have:
\begin{align*}
  \hat\Psi_{naive}  -  \Psi({P}^*) 
  & = (\pr_n - \pr) (\psi_{P^*}) \\
  & +(\pr_n - \pr)(\psi_{\hat{P}} - \psi_{P^*})\\
  & + R_2(\hat{P}, P^*)
\end{align*}

The first term is a simple average of fixed functions, where we can then apply the central limit theorem. 
The second term is usually referred to as the {\em empirical process term}. If \(\psi_{\hat{P}} \converge[P] \psi_{P^*}\), it can be shown to be of order \(o_P(1/\sqrt{n})\) under either Donsker class assumptions on $\hat{P}$, or in the sample-splitting regime which we adopt. The last term, also called {\it second order term}, is generally assumed to be of order \(o_P(1/\sqrt{n})\), which is typically determined in a case-by-case manner \citep{cheng1984strong, luo2016high, benkeser2016highly,farrell2018deep, wei2023efficient}. 

\citet{wolock2023nonparametric} employs this first-order correction to provide uncertainty quantification for variable importance of survival analysis. 
This gives rise to the construction of a confidence interval from 
\[
\hat\Psi_{naive} \pm z_{\alpha/2}  s_{\pr_n}(\psi_{\hat P})
\]
in which $z_{\alpha/2}$ are the quantile of a normal distribution and $s_{\pr_n}(\psi_{\hat P})$ indicates the standard deviation over the values of the influence function. However, we note from the non-asymptotic perspective, that the instability of empirical distributions can hinder the effectiveness of both methods, as highlighted by \cite{booth1998monte} and \citet{van2011targeted}.

\subsection{Proposed Methodology}\label{sec: proposed methodology}

In contrast to current de-biasing methods, our method provides an iterative update to remove the bias and produces a more refined estimator than the one-step versions. 
The targeted learning framework, first proposed by \cite{van2006targeted}, originates from semiparametric statistics and causal inference.  To obtain an asymptotically linear estimator, they proposed to perturb the empirical distribution in the direction of the influence function to obtain an efficient estimator. 
Specifically, we create a one-dimensional family of densities starting from $\hat{P}$ and moving in the direction of the influence function: \(P_\varepsilon = (1+\varepsilon\hat\psi)\hat{P}\) and find the maximum likelihood estimate of $\varepsilon$:
\begin{align*}
\hat{\varepsilon} = \argmin_{\varepsilon} \pr_n \log P_\varepsilon. 
\end{align*}
This defines a new estimated density $\hat{P}_{\hat{\varepsilon}}$ for which we can calculate an efficient influence function, a new one-dimensional family and a corresponding update. Repeating this yields a sequence of estimates:
\begin{align*}
\hat{\varepsilon}^{j+1} & = \argmin_{{\varepsilon}_j} \pr_n \log P^j_{\varepsilon} \\
P^{j+1}_{\varepsilon} & = (1 + \varepsilon^{j+1} \hat \psi_{P^j}) P^j
\end{align*}
We continue updating the distribution until we obtain and update \(\hat{\varepsilon}^k = 0\) at iterate $k$ and obtain  a final debiased estimator $\hat{\Psi}_{n} = \Psi(P^k)$. Many extensions of this framework have been proposed for causal inference: cross-validation TL, dynamic treatment regimes, and time-to-event outcome \cite{van2011cross,luedtke2016optimal,cai2020one}.
Instead, we employ TL to conduct uncertainty quantification for variable importance.

The intention of this iterative definition is to ensure that the likelihood is maximized and that the plug-in bias term (\(\pr_n\psi(\hat{P}))\) is zero.  From this, the first order error of $\hat{\Psi}_n - \Psi(P^*)$ is described by $\pr_n \psi_{P^k}$ which admits a central limit theorem and in common with the naive method we base confidence intervals on the standard deviation  $s_{\pr_n}(\psi_{P^k})$. The improved accuracy of this framework over the naive implementation is due both to a more exact control of bias and because the naive method does not account for uncertainty due to the plug-in bias. The iterative scheme that we propose requires a single representation of the distribution $P$ and thus cannot directly be employed within LOCO-type models that rely on re-training estimators.

We note that in order to obtain theoretical results with weaker conditions, we adopted the sample splitting strategy  implemented in \citet{van2011cross}, \citet{10.1111/ectj.12097} and \cite{newey2018cross}. That is, the plug-in estimate is obtained from the first set of data \(I_1\) and the iterative update is conducted using an independent data set \(I_2\). In theoretical results, we also assume a third set \(I_3\) used to quantify uncertainty, although we do not believe this is strictly necessary. In the next section, we present an implementation of the proposed methodology through CPI.

\subsection{Illustration}
In this section we apply our methodology to conditional variable importance. Detailed steps of our implementation can be found at Algorithm \ref{alg: conditional permutation importance calculation}. We begin by observing that the estimand of conditional permutation metric is defined as:
\begin{align}\label{equation: definition vi}
  \Psi^{C}(X, Y, Z) = \E\left[  L(y, \hat{y}(X^C, Z)) - L(y, \hat{y}(X, Z))  \right],
\end{align}
where \(\hat{y}(x, z) = \E\left[ Y| X = x,Z = z \right]\), \(X^C \sim X|Z\), and  \(X^C \perp X|Z\).

Here the first term  measures the ``conditional permuted'' performance. The second term in the estimand is the reference loss, which serves as the ``benchmark'' of our importance. We therefore decompose the conditional permutation importance into:
\begin{align*}
  \Psi^{C}(X, Y, Z) &= \Psi^{C}_0(X, Y, Z)- \Psi_0(X, Y, Z), 
\end{align*}
where \( \Psi^{C}_0(X, Y, Z) = \E\left[L(y, \hat{y}(X^C, Z))\right]\) and  \(\Psi_0(X, Y, Z) =  \E\left[ L(y, \hat{y}(X, Z))  \right]\).

These each have a corresponding influence function:

\begin{lemma}
  The efficient influence function for
  \[\Psi_0(X, Y, Z) = \E\left[L(y, \hat{y}(X, Z))\right]\]
  is:
  \begin{align*}
&\phantom{AA}\psi_0(X, Y, Z)\\
&= (Y - \hat{y}(X,Z)) \int L'(y,\hat{y}(X,Z))  P(y|X,Z) dy  \\&
+ L(Y,\hat{y}(X,Z)) - \Psi_0(P).
  \end{align*}
\end{lemma}

\begin{lemma}
  The efficient influence function for  
  \[\Psi^{C }_0(X, Y, Z) = \E\left[L(y, \hat{y}(X^C, Z))\right]\]
  is:
  \begin{align*}
\psi^{C }_0(X, Y, Z) &= \int L'(y, \hat{y}(X, Z))(Y- \hat{y}(X, Z)) p(y| Z)dy \\ 
  & + \int L\left( y, \hat{y}(X, Z)\right)p(y|Z)dy \\
  & - \int L\left( y, \hat{y}(x, Z) \right)p(y|Z) p(x|Z)dxdy\\
  & + \int L\left(Y, \hat{y}(x, Z)\right)p(x|Z)dx - \Psi^{C }_0(P).
  \end{align*}
\end{lemma}

The algorithm for calculating conditional permutation importance  is shown in Algorithm~\ref{alg: conditional permutation importance calculation}, following the methodology outlined in Section \ref{sec: proposed methodology}.  

Implementing a TL update for CPI requires, in addition to and estimate of $\hat{y}(X,Z)$, which we obtain from $\hat{f}$, auxiliary estimates of $p(y|Z)$ and $p(x|Z)$.  In practice, we implement these via a weighted empirical distribution on $I_1$ and calculate the integrals above via Monte Carlo simulation. In particular, we fit a random forest (RF) to predict $X$ or $Y$ from $Z$ and  use OOB data to derive the conditional distributions for $p(y|Z)$ and $p(x|Z)$ from the tree kernel defined by in-leaf proximities, following \cite{lu2021unified}. We express this as $P(y = Y_i|Z) =  w_i(Z)$ where the $w_i(Z)$ are initially obtained the frequency with which $Z_i$ appears in the same leaf as $Z$ across trees in the RF for which $(Y_i,Z_i)$ is out of bag.  To construct an updated TL distribution we simply need to multiply the weights $w_i(Z)$ by $(1 + \hat{\epsilon}) \hat{\psi}$.

We can easily generalize this algorithm into $K-$folds rather than a single split, which would result in the same asymptotic result; choosing \(K = 10\) produces to a more numerically stable result. We refer the readers to \cite{smith2023application} for a more comprehensive review on the selection of folds for targeted learning.

\begin{algorithm}[!ht]
  \begin{algorithmic}[1]
    \Require $\{Y_{i}, X_i, Z_{i}\}$ for $i = 1, \dots, n$, \(I_1, I_2, I_3\) such that \(I_1 \cup I_2\cup I_3 = \left\{ 1, \dots, n \right\}\) and \(I_1 \cap I_2 \cap I_3 = \emptyset\). 
    \State Train an initial estimate $\hat{f}_{I_1}$.

    \State Estimate \(\hat{P}(x|z), \hat{P}(y| z)\).

    \For {each iteration $t$}
  \State Sample from \(\hat{P}(x|z), \hat{P}(y| z)\), denoted as \(\left\{ X^*_j \right\}_{j = 1, \dots, m}\) and \(\left\{ Y^* \right\}_{k = 1, \dots, m}\) respectively.
  \State Calculate 
  \[\hat{\Psi}_{I_2, 0}^{C} = \frac{1}{|I_2|} \sum_{i\in I_2} (Y_i-\hat{f}(X_i^C, Z_i))^2. \]
  and
  \begin{align*}
    &\quad\hat{\psi}_{I_2, 0}^{C}(X_i,Y_i,Z_i;\hat{P}) \\
    & \hspace{1cm} =\frac{1}{m}\sum_{j = 1}^n  L (Y_i,\hat{f}(X_j^*,Z_i))\\
    & \hspace{1.5cm}- \frac{1}{m^2}\sum_{j = 1}^n \sum_{k = 1}^n L(Y_k^*,\hat{f}(X^*,Z_i))\\
    & \hspace{1.5cm}+ \frac{1}{m}\sum_{k = 1}^n  L(Y^*_k, \hat{f}(X_i,Z_i)) 
  \end{align*}
  \State  Find $\hat\epsilon$ to maximize the likelihood of $\sum_{i\in I_2}c(\hat\epsilon)\hat{P}(X_i, Y_i, Z_i)(1+\hat\epsilon \hat{\psi}_{I_2, 0}^{C}(X_i, Y_i, Z_i;\hat{P}))$.
  \State Update $\hat{P} = c(\hat\epsilon)(1+\hat\epsilon \hat{\psi}_{I_2, 0}^{C})\hat{P}$ 
  \EndFor
  \State Repeat the above iteration until convergence.
  \State \textbf{Return:} \(\hat{\Psi} (\hat{f}_{I_1}, {P}_{\varepsilon^{k_n}})\) and variance \(\sqrt{\frac{1}{n}\sum_{i\in I_3}\hat{\psi}_{I_2, 0}^{C} }\) based  on \(I_3\).
    \end{algorithmic}
    \caption{Conditional permutation calculation on \(I_1\) with mean squared error loss}\label{alg: conditional permutation importance calculation}
\end{algorithm}

\subsection{Why conditional permutation importance?}\label{sec: why cpi?}
We  explore conditional permutation importance for two reasons: CPI provides an orthogonal factorization and CPI avoids density ratio estimation, as mentioned in \cite{verdinelli2024decorrelated}.

{\it Example: CPI factorization} 

Traditional targeted learning relies on factorizing the influence function into orthogonal components in order to conduct updates, as seen in average treatment effect estimation \citep{van2011cross}.

For conditional permutation importance we consider components corresponding to \(P_{X, Y|Z}\) and \(P_{Z}\). Then, for mean squared error, we have:
\begin{itemize}
  \item \(P_{X, Y|Z}\): 
  \begin{align*}
\psi^{C}_0(X,Y|Z) &= \int L'(y, \hat{y}(X, Z))(Y- \hat{y}(X, Z)) p(y| Z)dy\\
& + \int L\left( y, \hat{y}(X, Z)\right)p(y|Z)dy \\
& - 2\int L\left( y, \hat{y}(x, Z) \right)p(y|Z) p(x|Z)dxdy\\
& + \int L\left(Y, \hat{y}(x, Z)\right)p(x|Z)dx 
  \end{align*}
  \item \(P_{Z}\): 
\begin{align*}
  \psi^{C}_0(Z) 
  & =  \int L\left( y, \hat{y}(x, Z) \right)p(y|Z) p(x|Z)dxdy\\
  &- \Psi^{C}_0(P).
\end{align*}
  \end{itemize}
  Notice that for \(P_{Z}\) the empirical log-likelihood of the data points \(Z\) is  already maximized at the empirical distribution and thus no update is needed. 

  For \(P_{X, Y|Z}\), we employ the iterative update methodology, which follows a similar structure as algorithm \ref{alg: conditional permutation importance calculation}, but with a much simpler form. 
 
  As a contrast, we consider the efficient influence function of the LOCO importance. Following the similar construction as conditional permutation importance, for LOCO importance, we have:
  \[\Psi^{d}(X, Y, Z) = \Psi^{d}_0(Y, Z) - \Psi_0(X, Y, Z),\]
  where \(\Psi^{d}_0(Y, Z) = \E\left[L(y, \hat{y}(Z))\right]\). The corresponding influence function is:
  \begin{lemma}[\cite{williamson2020efficient}]
Let \(\Psi^{d}_0(X, Y, Z) = \E\left[L(y, \hat{y}(Z))\right]\), the efficient influence function is:
\begin{align*}
  \psi^{d}_0(X, Y, Z) &= (Y - \hat{y}(Z)) \int L'(y,\hat{y}(X,Z))  P(y|Z) dy\\&  + L(Y,\hat{y}(Z)) - \Psi^d_0(P).
\end{align*}
  \end{lemma}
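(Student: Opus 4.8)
The plan is to derive the formula by the standard guess-and-verify argument over one-dimensional parametric submodels, which here is a near-verbatim adaptation of the proof of the lemma for $\Psi_0=\E[L(Y,\hat y(X,Z))]$: the estimand $\Psi^d_0$ is the same functional with the full-model regression $\hat y(X,Z)=\E_P[Y\mid X,Z]$ replaced by the reduced-model regression $\hat y(Z)=\E_P[Y\mid Z]$. I would fix a regular submodel $\{P_\varepsilon\}$ through $P^\ast$ at $\varepsilon=0$ with score $s\in L_2^0(P^\ast)$, decompose $s=s_{Y\mid X,Z}+s_{X\mid Z}+s_Z$ along the usual nonparametric factorization of the tangent space, and compute $\frac{d}{d\varepsilon}\Psi^d_0(P_\varepsilon)\big|_{\varepsilon=0}$.

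By the product rule this derivative splits into (a) the piece in which the outer law is perturbed while the nuisance $\hat y(\cdot)$ is frozen at its $P^\ast$ value, and (b) the piece in which $\hat y_{P_\varepsilon}(\cdot)$ is perturbed. Piece (a) is the derivative of the expectation of a fixed function and returns the centered term $L(Y,\hat y(Z))-\Psi^d_0(P^\ast)$. For (b) the only genuinely new computation is the pathwise derivative of $z\mapsto\hat y_{P_\varepsilon}(z)=\E_{P_\varepsilon}[Y\mid Z=z]$. Writing $\hat y_P(z)=\int\hat y_P(x,z)\,p(x\mid z)\,dx$ and differentiating shows that both the $p(y\mid x,z)$ and the $p(x\mid z)$ coordinates of the submodel move $\hat y_{P_\varepsilon}(z)$; the small lemma I need is that, after conditioning and using $\E[s_{Y\mid X,Z}\mid X,Z]=0$ and $\E[s_{X\mid Z}\mid Z]=0$, these two contributions recombine into $\frac{d}{d\varepsilon}\hat y_{P_\varepsilon}(z)\big|_0=\E_{P^\ast}[(Y-\hat y(Z))\,s\mid Z=z]$ — exactly what one would obtain if $Y$ were modeled directly given $Z$. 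Substituting into the chain-rule term and applying iterated expectations gives $(b)=\E_{P^\ast}[(Y-\hat y(Z))\,g(Z)\,s]$ with $g(z)=\int L'(y,\hat y(z))\,p(y\mid z)\,dy$, the conditional mean of the loss gradient at the reduced predictor. (This is where the statement, as typeset, should be read with $\hat y(Z)$ rather than $\hat y(X,Z)$ inside the integral: the computation returns the former, which is also the only reading under which the integrand depends on the model through observable coordinates alone.)

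Adding (a) and (b) yields $\frac{d}{d\varepsilon}\Psi^d_0(P_\varepsilon)\big|_0=\E_{P^\ast}[\psi^d_0(X,Y,Z)\,s]$ with $\psi^d_0$ as claimed; since this holds for every score $s$ in the tangent set, $\psi^d_0$ is a gradient. It then remains to check $\psi^d_0\in L_2^0(P^\ast)$: mean zero follows from $\E[L(Y,\hat y(Z))]=\Psi^d_0$ and $\E[(Y-\hat y(Z))g(Z)]=\E[g(Z)\,\E(Y-\hat y(Z)\mid Z)]=0$, and square-integrability holds under the same moment and boundedness assumptions used for the $\Psi_0$ lemma. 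Because $\mathcal{M}$ is nonparametric, the tangent space is all of $L_2^0(P^\ast)$, so the gradient is unique, and hence $\psi^d_0$ is the efficient influence function (Lemma~25.19 of \citet{van2000asymptotic}; Theorem~5.2.1 of \citet{bickel1993efficient}).

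I expect the one non-routine step to be precisely the pathwise derivative of the reduced regression $z\mapsto\E_P[Y\mid Z=z]$ and the bookkeeping that its $p(x\mid z)$-perturbation leaves no residual term; everything else is a transcription of the $\Psi_0$ argument with the conditioning set $(X,Z)$ collapsed to $Z$. A secondary point worth stating explicitly is that $\hat y(Z)$ here is the conditional mean $\E_P[Y\mid Z]$ (consistent with $\hat f$ approximating a regression function), not the $L$-risk minimizer over functions of $Z$; were it the latter, the first-order condition would force $g\equiv 0$ and annihilate the correction term, whereas for a general loss the conditional-mean predictor does not satisfy that condition and the correction genuinely contributes — though it does vanish in the special case of squared-error loss, which serves as a useful sanity check.
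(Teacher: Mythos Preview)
Your argument is correct. The paper does not supply its own proof of this lemma---it is attributed to \citet{williamson2020efficient} and stated without derivation---but your submodel/score computation is exactly the route the supplement takes for the sibling Lemmas~1, 2, and 4 via the equivalent point-mass contamination formalism of \citet{hines2022demystifying}; your observation that the $\hat y(X,Z)$ inside the integral should read $\hat y(Z)$ is also correct and consistent with how the analogous term arises in Lemma~1.
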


  Compared to CPI, the estimation of the LOCO involves the estimation of two models \(\hat{y}(Z), \hat{y}(X,Z)\), where \(\hat{y}(Z)\) is based on the retraining of a new model based on perturbed data. This is described as having variational dependent models and thus need a more subtle treatment.

  Finally, we also examine the influence function of the traditional permutation importance metric.

\begin{lemma}
Let 
\[\Psi^{\pi L}_0(X, Y, Z) = \E\left[L(y, \hat{y}(X^\pi, Z))\right].\]
The efficient influence function is:
\begin{align*}
&\phantom{AA}\psi^{\pi L}_0(X, Y, Z) \\
& = (Y - \hat{y}(X, Z))\int L'(y,\hat{y}(X,Z)) \frac{P(X)P(y,Z)}{P(X, Z)} dy \\ 
&+  \int L(Y,\hat{y}(x',Z))P(x')dx'\\
&+ \int L(y,\hat{y}(X,z))P(y,z)dydz - 2 \Psi^{\pi L}_0(P),
\end{align*}
where \(X^\pi \sim X\), and \(X^\pi \perp X\).
\end{lemma}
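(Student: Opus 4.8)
The plan is to compute the efficient influence function for $\Psi^{\pi L}_0(P) = \E_P[L(Y, \hat y(X^\pi, Z))]$ by the same recipe used for the previous lemmas in this section, namely: write the functional as an expectation against the relevant marginal/conditional densities of $P$, perturb along a one-dimensional submodel $P_\varepsilon$ with score $s(X,Y,Z)$, differentiate at $\varepsilon = 0$, and then read off the Riesz representer relative to the $L_2^0(P)$ inner product. The only conceptual twist relative to the CPI case is that here the ``permuted copy'' $X^\pi$ is drawn from the \emph{marginal} of $X$ independently of $(Y,Z)$, rather than from the conditional $X|Z$; so the functional factorizes through the product density $P(x')\,P(y,z)$ rather than through $P(x|z)\,P(y|z)$. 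Concretely I would write
\[
\Psi^{\pi L}_0(P) = \int L\big(y, \hat y_P(x', z)\big)\, P(x')\, P(y,z)\, dx'\, dy\, dz,
\]
keeping in mind the subtlety that $\hat y = \hat y_P$ itself depends on $P$ through $\E_P[Y|X,Z]$, so the chain rule produces an extra term coming from differentiating $\hat y_P$.

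The key steps, in order, are: (1) Decompose the derivative $\frac{d}{d\varepsilon}\Psi^{\pi L}_0(P_\varepsilon)\big|_{\varepsilon=0}$ into three pieces — one from perturbing the marginal $P(x')$, one from perturbing the joint $P(y,z)$, and one from perturbing $\hat y_P$ inside the loss. (2) For the first two pieces, use the standard fact that differentiating $\int g\, dP_\varepsilon$ gives $\E_P[g(X,Y,Z)\, s(X,Y,Z)] - \E_P[g]\,\E_P[s]$ applied to the appropriate marginalized $g$; these contribute the $\int L(Y,\hat y(x',Z))P(x')dx'$ and $\int L(y,\hat y(X,z))P(y,z)dydz$ terms, each accompanied by a $-\Psi^{\pi L}_0(P)$ centering, which is why the final expression carries $-2\Psi^{\pi L}_0(P)$. (3) For the $\hat y$-perturbation piece, use that the pathwise derivative of $\hat y_P(x,z) = \E_P[Y|X=x,Z=z]$ in the direction $s$ is $\E_P[(Y-\hat y(x,z))\,s(x,y,z)\mid X=x,Z=z]$ (the classical conditional-mean influence function), substitute, and then swap the order of integration so that the score $s$ appears multiplied by a function of $(X,Y,Z)$; the Bayes-rule bookkeeping $P(x')P(y,z)/P(x,z)$ arises precisely when one re-expresses the integral $\int L'(y,\hat y(x,z))(y - \hat y(x,z))P(x)P(y,z)\,\cdots$ as a conditional expectation given the perturbed coordinates. (4) Collect the coefficient of $s(X,Y,Z)$, verify it lies in $L_2^0(P)$ (mean zero is automatic from the centering), and conclude it is the efficient influence function since $\mathcal M$ is nonparametric, invoking the remark in the excerpt that in the nonparametric model the influence function is automatically the efficient one.

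The main obstacle I expect is step (3): correctly handling the density-ratio factor $P(X)P(y,Z)/P(X,Z)$ that appears in the first term. This is exactly the ``density ratio'' complication the authors flag in Section~\ref{sec: why cpi?} as a reason to prefer CPI, and getting the Radon–Nikodym bookkeeping right — making sure the change of variables between ``$X^\pi$ drawn from the marginal'' and ``$X$ in the reference term'' produces $P(X)P(y,Z)/P(X,Z)$ and not some other combination — is the delicate part. A secondary nuisance is keeping the three centering constants straight so that they sum to $-2\Psi^{\pi L}_0(P)$ rather than $-\Psi^{\pi L}_0(P)$ or $-3\Psi^{\pi L}_0(P)$; the cleanest way to audit this is to check that $\E_P[\psi^{\pi L}_0] = 0$ directly on the claimed formula, which pins down the constant unambiguously. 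Everything else is the routine perturb-differentiate-represent calculation already exemplified by the preceding lemmas, so I would present those steps compactly and spend the detail budget on the density-ratio term.
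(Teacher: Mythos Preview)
Your proposal is correct and follows essentially the same route as the paper's proof: both write the functional as $\int L(y,\hat y(x',z))\,P(x')\,P(y,z)\,dx'\,dy\,dz$, apply the product rule to obtain three pieces (from perturbing $P(x')$, $P(y,z)$, and $\hat y$), and plug in the known influence function of the conditional mean for the third piece to produce the density-ratio factor $P(X)P(y,Z)/P(X,Z)$. The only cosmetic difference is that the paper uses the point-mass contamination shortcut of \cite{hines2022demystifying} (so the Riesz representer appears directly without isolating the coefficient of a general score $s$), but the decomposition, the bookkeeping, and the $-2\Psi^{\pi L}_0(P)$ centering count are identical to what you outline.
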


We note that the performance of the density ratio estimation in the first term can be unstable, due to extrapolation and inherent low density at certain regions -- a problem that also applies to the decorrelated LOCO as mentioned in \cite{verdinelli2024decorrelated}.

\section{Theoretical results}\label{sec: theory}
In this section, we present the theoretical results of our methodology.
To formally introduce the theoretical results, we start with a brief introduction to a few concepts that would be helpful in developing our method. We start with the efficiency theory and methodology of targeted learning in section \ref{sec: raltl}, then we present the theoretical result for the estimator obtained in algorithm \ref{alg: conditional permutation importance calculation}.

\subsection{Efficiency theory and targeted learning}\label{sec: raltl}
By considering the variable importance as {\it general parameter} \(\Psi: P \to \R, P \in \mathcal{M}\), where \(\mathcal{M}\) is the class of nonparametric distributions, our aim is to find a ``good'' estimator of the true value \( \Psi(P^*)\), and then construct the corresponding confidence interval.  We define ``good'' using three criteria:
\begin{itemize}
\item Consistency: we would like to construct an estimator that is statistically consistent, which can be guaranteed by {\it asymptotically linearity}.
\item Robustness: we would like to construct an estimator that is robust to small perturbations of the data distribution, which can be guaranteed with {\it regularity}.
\item Efficiency: we hope to have an estimator that has minimum-possible variance given the available data, which will be ensured by the TL methodology, based on the two other requirements.
\end{itemize}
Building upon these three objectives, our goal is to construct an efficient, regular, and asymptotically linear estimator. In the following sections, we will rigorously define our concepts and then introduce the targeted learning methodology to construct such an estimator.

\subsubsection{Regular Asymptotically Linear (RAL) Estimators} \label{sec: raldef}
To begin with, we at least hope we can estimate with guaranteed consistency. One such class of estimators is the {\it asymptotically linear} estimators, where classical {\it asymptotically linear} estimators for parametric models include maximum likelihood estimation and generalized method of moments under mild conditions. In addition to the fact that the influence function characterizes the first-order term of a pathwise differentiable estimand, it also determines the asymptotic distribution of asymptotic linear estimator. Formally, asymptotically linear estimator is defined as:
\begin{definition}\label{def1: asymptotically linear}
An estimator sequence \(\{\hat{\Psi}_n(P^*)\}\) is said to be asymptotically linear with influence function \(\psi \in L_2^0(P^*)\) at distribution \(P^*\) if 
\begin{align*}
\sqrt{n}(\hat\Psi_n(P^*) - \Psi(P^*)) -  \frac{1}{\sqrt{n}}\sum_{i = 1}^n \psi(X_i) = o_P(1).
\end{align*}
\end{definition}

We note that when the influence function is the same as {\it efficient influence function}, the asymptotically linear estimator is efficient.

\subsubsection{Tangent Space}\label{sec: tangent space}
The {\it tangent space} characterizes the collection of possible functions to locally construct a path between distributions, defined by score function \(h = \frac{d}{d\varepsilon}\log dP^*\big |_{\varepsilon = 0}\) and their linear combinations at distribution \(P^* \in \mathcal{M}\) \citep{bickel1993efficient,van2000asymptotic}. 
Formally, {\it tangent space} is defined as:
\begin{definition}
  Let \(\{V_1, \dots, V_n\}\) denote the collection of score functions of \(P^* \in \mathcal{M}\), then the tangent space \(\dot{P^*}\) of \(P^*\) is defined as the linear span of \({V_1, \dots, V_n}\).
\end{definition}

For the class of nonparametric distributions \(\mathcal{M}\), the tangent space is \(\dot{\mathcal{P}^*}\coloneq L_2^0(P^*)\) \citep{bickel1993efficient}. 

With the tangent space defined, we can say that a sequence of estimators \(\hat\Psi_n\) at \(P^*\) is {\it regular} if there exists a probability measure \(L\) such that:
\begin{align*}
  \sqrt{n}\left(\hat\Psi_n - \Psi\left(P_{1/\sqrt{n},g}\right)\right) \overset{{P_{1/\sqrt{n},g}}}{\rightsquigarrow} L, \quad \mbox{for each } g \in \dot{\mathcal{P}^*},
\end{align*}
where \(P_{1/\sqrt{n},g} = (1 + \frac{1}{\sqrt{n}}g) P^*\).

\subsection{Asymptotic Results}
In this section, we outline the assumptions necessary to establish the efficiency of our final estimator and then present our main theorem. 
\begin{assumption}[Convergence]\label{assumption: convergence}
Let \(k_n\) denote the number of iterations until the algorithm converges. Assume that there exists \(k_n = k(\hat{P}) >0\) such that \(P(k(\hat{P}) < k_0) \to 1\) for some \(k_0 \equiv k(P^*)\) and 
\begin{align*}
   \frac{1}{|I_2|}\pr_{n, I_2}\psi_{P_{\varepsilon}^{k_n}} = o_P(1/\sqrt{n}),
\end{align*}

The same equation holds if we consider the empirical distribution of \(I_3\). In addition, we assume that the \(k_0-\)th step of estimate \( P_{\varepsilon^{k_0}}\) converges to \(P^*\) almost surely, where \(P^*\in \mathcal{M}\) is the least favorable model.
\end{assumption}
This assumption is standard for cross-validation TL \citep{van2011cross}. Here we assume that the algorithm will converge in at most the \(k_0\) steps and that the efficient influence function will be small. In addition, the assumption ensures the limiting distribution is within the nonparametric model class and the first-order optimality. Lastly, Assumption~\ref{assumption: convergence} implicitly places an assumption on the initial estimator, as a poorly chosen initial estimator could result in divergence. In practice, initial estimators based on either the plug-in or Z-estimation approach have been shown to perform well.

\begin{assumption}[Differentiability and Optimality]
\label{assumption: derivative and second order term}
Given a variable importance metric \(\Psi\), we assume that it is pathwise differentiable for the class of nonparametric distributions \(\mathcal{M}\). In addition, the von Mises expansion satisfies:
\begin{align*}
\Psi(\hat{f}_{I_1}, \hat{P}) - \Psi(f^*, P^*) 
&= \int \psi(\hat{f}_{I_1}, \hat{P}) d(\hat{P} - P^*) \\&+ O_P(\|\Psi(\hat{f}_{I_1},\hat{P}) - \Psi(f^*, P^*)\|^2),
\end{align*}
where \(\hat{P} \in \mathcal{M}\), \(f^* \equiv \hat{y}\), and \(\hat{y}\) is defined as in equation \ref{equation: definition vi}.
\end{assumption}

This assumption restricts the differentiability of the variable importance measure and imposes an assumption on the asymptotic performance of the second-order remainder term, originating from \cite{van2011cross}. Together with Assumption \ref{assumption: convergence}, the above two results guarantee the asymptotic efficiency of the TL estimator. 

\begin{remark}
We note that assumption \ref{assumption: derivative and second order term} functions in a similar manner as the (A1) and (B1) given in \cite{williamson2023general} or Assumption 5 of \cite{wei2023efficient}. In both cases, the aim is to control the second-order term. By considering the second-order term as the order of the bias directly, the proof is greatly simplified.
For conditional permutation variable importance, we can alternatively have the order of \(\E_{I_2}\left[\| \hat{p}(y|z) - p(y|z)\right] \|, \) \(\E_{I_2}\left[\|\hat{p}(x|z) - p(x|z) \|\right],   \E_{I_2}\left[\| \hat{f}(x, z) - f^*(x, z) \|\right] \)  be \(o_{P}(n^{-1/4})\), where \(f\) is the estimator and \(\hat{p}(x|z)\) is the estimator of density \(p(x|z)\). A similar assumption can be defined for \(I_3\) as well.
\end{remark}

\begin{assumption}[Consistency]
\begin{align*}
\int \left( \Psi(\hat{f}_{I_1}, {P}^*) - \Psi({f}^*, P^*)\right)^2dP^* = o_P(1)
\end{align*}
\end{assumption}
This assumption ensures the consistency of the plug-in estimator, which is also given in \cite{van2011cross}. Without such, the result wouldn't be efficient. 

\begin{assumption}[Sample-Splitting]\label{assumption: samplesplitting}
Let \(\varepsilon_{*}^{j}\) be the limit of \(\varepsilon_n^{j}\), that is \(\varepsilon_n^{j} \converge[P] {\varepsilon}_{*}^j\) for $j \in 1,\ldots,k_0$. 
We assume that the final efficient influence function \(\psi_{P_{\varepsilon^{k_n}_n}}\) is estimated from \(I_1, I_2\), independent from empirical measure of \(I_3\), denoted as \(\pr_{n, I_3}\). And our final estimator is obtained through \(I_3\). To ensure the consistency, we assume that \(\sup_{j \leq k_0} \|\psi_{P_{\varepsilon^{j}_n}} - \psi_{P^*} \| = o_P(1)\).
\end{assumption}
\begin{remark}
  Assumption \ref{assumption: samplesplitting}  is specifically designed to address the empirical process term. In particular, we introduce an additional subset of the data, \(I_3\), to ensure independence between the efficient influence function and the final estimator. Although this approach differs from the classical method described by \cite{10.1111/ectj.12097}, it is necessitated by the iterative nature of our procedure, in contrast to their one-step framework.
\end{remark}
As an alternative to Assumption 4, Donsker assumptions similar to A2 of \cite{van2011cross} can also be made to establish the asymptotic results, which we refer to as Assumption 5; details can be found in the supplementary materials.
\begin{theorem}\label{thm: mainthm}
  Assume that Assumptions 1-3 hold, and Assumption 4 or 5 hold. Our final estimator \(\hat{\Psi} (\hat{f}_{I_1}, {P}_{\varepsilon^{k_n}_n})\) is asymptotically linear and satisfies:
  \begin{align*}
\hat{\Psi} (\hat{f}_{I_1}, {P}_{\varepsilon^{k_n}_n}) - \Psi(P^*) = \pr_n \psi_{P^*} + o_P(1/\sqrt{n}),
  \end{align*}
  where \(\psi(P^*)\) is the efficient influence function.
\end{theorem}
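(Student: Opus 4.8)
The plan is to follow the targeted-learning template: decompose the error of the plug-in at the final fluctuated distribution into a linear term in the efficient influence function, a second-order remainder, and an empirical-process term, show the latter two are $o_P(n^{-1/2})$, and let the iterative update turn the first into the desired i.i.d.\ average. Concretely, I would first condition on $I_1$, freezing $\hat f_{I_1}$ and the initial density fits, and write $\hat P \equiv P_{\overrightarrow{\varepsilon}^{k_n}_n}$ for the output of the iteration. Assumption~\ref{assumption: derivative and second order term} then gives the von Mises expansion
\[
\Psi(\hat f_{I_1}, \hat P) - \Psi(f^*, P^*) = \int \psi(\hat f_{I_1}, \hat P)\, d(\hat P - P^*) + R_2, \qquad R_2 = O_P\big(\|\Psi(\hat f_{I_1}, \hat P) - \Psi(f^*, P^*)\|^2\big).
\]
Each efficient influence function of Section~\ref{sec: proposed methodology} is centered under its own law by the ``$-\Psi_0(P)$'' term, so $\int \psi(\hat f_{I_1}, \hat P)\, d\hat P = 0$ and the linear term collapses to $-P^*\psi(\hat f_{I_1}, \hat P)$. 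For the remainder I would invoke the Remark following Assumption~\ref{assumption: derivative and second order term}: the $o_P(n^{-1/4})$ rates on $\hat p(y\mid z)$, $\hat p(x\mid z)$, $\hat f$ --- which survive the finitely many bounded fluctuation steps since, by Assumption~\ref{assumption: convergence}, the update path still has limit $P^*$ --- yield $R_2 = o_P(n^{-1/2})$, with Assumption~3 furnishing the underlying $L_2(P^*)$-consistency.

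Next I would introduce the empirical measure of the fold used for the final average --- $\pr_{n,I_3}$ under Assumption~\ref{assumption: samplesplitting}, $\pr_{n,I_2}$ under Assumption~\ref{assumption: donsker}, written $\pr_{n,\star}$ --- and expand
\[
-P^*\psi(\hat f_{I_1}, \hat P) = (\pr_{n,\star} - P^*)\psi(f^*, P^*) + (\pr_{n,\star} - P^*)\big(\psi(\hat f_{I_1}, \hat P) - \psi(f^*, P^*)\big) - \pr_{n,\star}\psi(\hat f_{I_1}, \hat P).
\]
The last term is $o_P(n^{-1/2})$ because the Newton-type update drives the empirical efficient-influence-function equation to that order (Assumption~\ref{assumption: convergence}, in the form stated for $I_3$). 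For the middle empirical-process term there are two routes. Under Assumption~\ref{assumption: samplesplitting}, $\psi(\hat f_{I_1}, \hat P)$ depends only on $I_1 \cup I_2$ and is therefore independent of $I_3$, so conditionally on $I_1 \cup I_2$ this term is a centered i.i.d.\ average of variance at most $|I_3|^{-1}\|\psi(\hat f_{I_1}, \hat P) - \psi(f^*, P^*)\|^2 = o_P(n^{-1})$ by the consistency clause of that assumption, and Chebyshev gives $o_P(n^{-1/2})$. Under Assumption~\ref{assumption: donsker}, I would instead show $\psi(\hat f_{I_1}, \hat P) - \psi(f^*, P^*)$ eventually lies in the shrinking class $\mathcal F_{\delta_n}(P_{n,I_2})$ --- using Assumption~\ref{assumption: convergence} to identify the limit of the fluctuation vector with $\varepsilon_{k_0}^*$, hence $P^*$ with $P_{\varepsilon_{k_0}^*}$ --- and invoke the equicontinuity bound for empirical processes indexed by Donsker classes with vanishing envelopes, which the stated entropy--envelope condition guarantees.

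Collecting the pieces, $\hat\Psi - \Psi(P^*) = (\pr_{n,\star} - P^*)\psi(f^*, P^*) + o_P(n^{-1/2}) = \pr_{n,\star}\psi(P^*) + o_P(n^{-1/2})$ since $P^*\psi(P^*) = 0$; summing the per-fold identities over the cross-fitting partition, with each fold cycling through the role of $\pr_{n,\star}$, replaces $\pr_{n,\star}$ by $\pr_n$ and gives the claimed representation. Because $\mathcal M$ is nonparametric, $\psi(P^*)$ is the canonical gradient, so this identity simultaneously certifies asymptotic linearity and, via the convolution theorem of \citet{bickel1993efficient}, attainment of the efficiency bound; regularity is then automatic, since an asymptotically linear estimator whose influence function is the canonical gradient --- an element of the tangent space $L_2^0(P^*)$ --- is regular (see \citet{van2000asymptotic}).

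The hard part will be the empirical-process term under the Donsker route: one must carry the randomness of both $\hat f_{I_1}$ and $\overrightarrow{\varepsilon}^{k_n}_n$ through $\psi$, verify membership in $\mathcal F_{\delta_n}(P_{n,I_2})$ for a suitable deterministic $\delta_n \to 0$, and apply a Donsker-with-random-index equicontinuity lemma, with the bookkeeping that aligns $P^*$ with the limit $P_{\varepsilon_{k_0}^*}$ of the update path (Assumption~\ref{assumption: convergence}) being the delicate point. A secondary difficulty is the mismatch between the clean functionals of the Lemmas, whose nuisances come from a single $P$, and the algorithm's separately fitted regression $\hat f_{I_1}$: the regression-dependent piece of $\psi$ cannot be absorbed by the mean-zero identity and must be carried in $R_2$, which is exactly why the $o_P(n^{-1/4})$ rate on $\hat f$ appears in the Remark.
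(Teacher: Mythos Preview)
Your proposal is correct and follows essentially the same route as the paper: the paper simply defers the Assumption~4 case to Theorem~5 of \citet{van2011cross} and, for Assumption~5, writes out exactly your conditional-Chebyshev bound on the empirical-process term before again invoking \citet{van2011cross} for the remaining decomposition. Your write-up is more explicit than the paper's (you spell out the von Mises and mean-zero steps that the paper leaves inside the cited reference), but the architecture and the key calculations coincide.
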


Note that the above three assumptions are defined for \(\hat{f}_{I_1}\), where similar assumptions can easily be defined for \(\hat{f}_{I_2}, \hat{f}_{I_3}\). As a result, we can also average over swapping the rolls of subsets and \( \bar{\Psi} \equiv 1/3(\hat{\Psi} (\hat{f}_{I_1}, {P}_{\varepsilon^{k_n}_n}) + \hat{\Psi} (\hat{f}_{I_2}, {P}_{\varepsilon^{k_n}_n}) + \hat{\Psi} (\hat{f}_{I_3}, {P}_{\varepsilon^{k_n}_n}) ) \) obtains the same asymptotic results while achieving better sample efficiency.

Our result implies that our estimator is asymptotically normal and efficient. We note that under similar assumptions, the plug-in estimator with bias correction may obtain the same asymptotic performance, yet preserve worse finite data performance. These asymptotic results can also be easily extended to \(K-\)fold case and the asymptotic performance would remain the same.

\begin{remark}
Compared to \cite{van2011cross}, the only difference in assumptions is that we do not impose Donsker conditions on $\hat{f}$ by considering the sample splitting methodology if we adopt Assumptions 1-3 and 4. Relative to \cite{williamson2023general}, we additionally imposed Assumption, Assumption \ref{assumption: convergence} to ensure the stochastic convergence of the algorithm.
\end{remark}

\begin{figure}[!ht]
\centering
Bias \\
\includegraphics[width=0.46\textwidth]{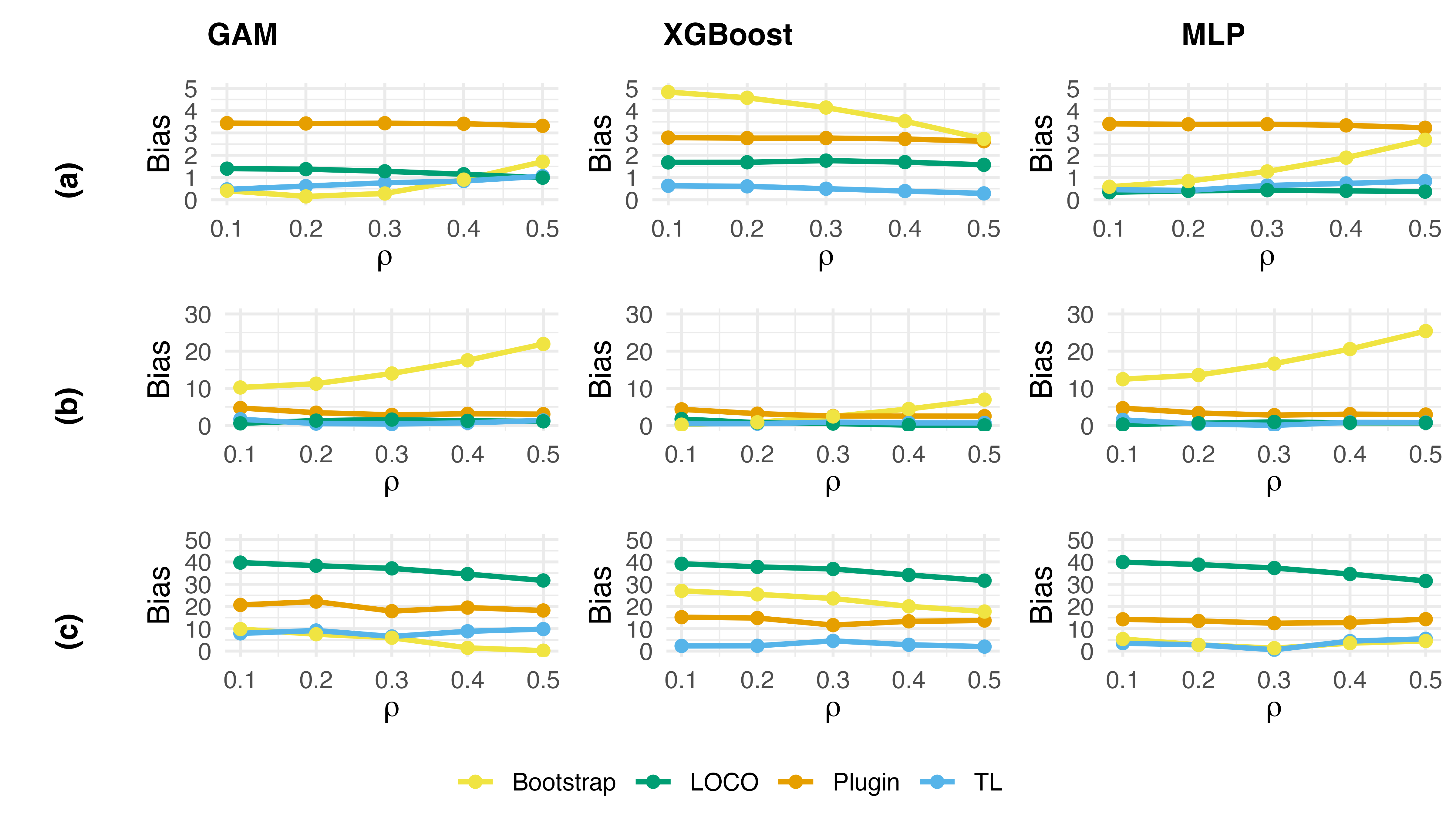} \\
Coverage \\
\includegraphics[width=0.46\textwidth]{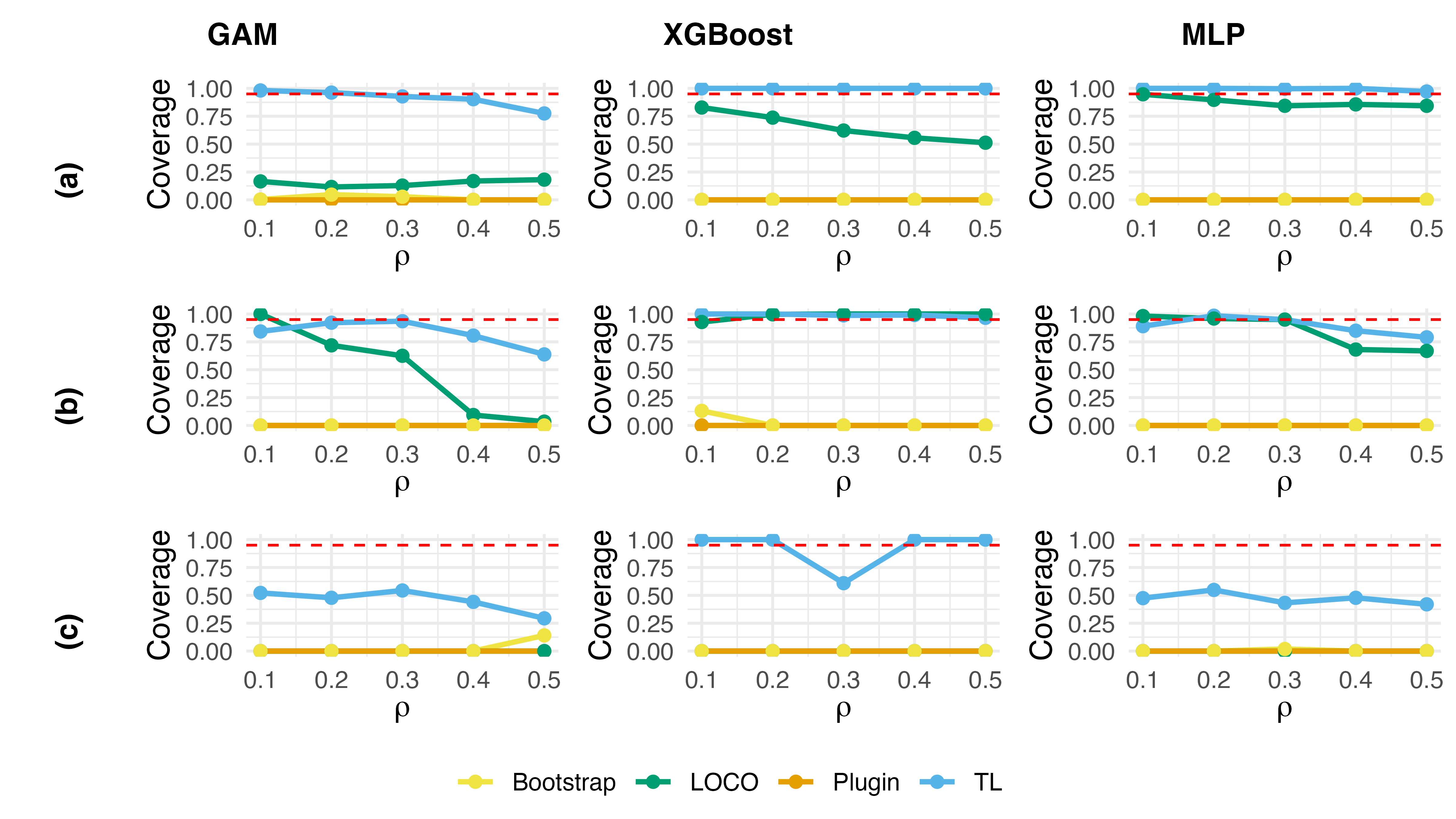}\\
CI Length \\
\includegraphics[width=0.46\textwidth]{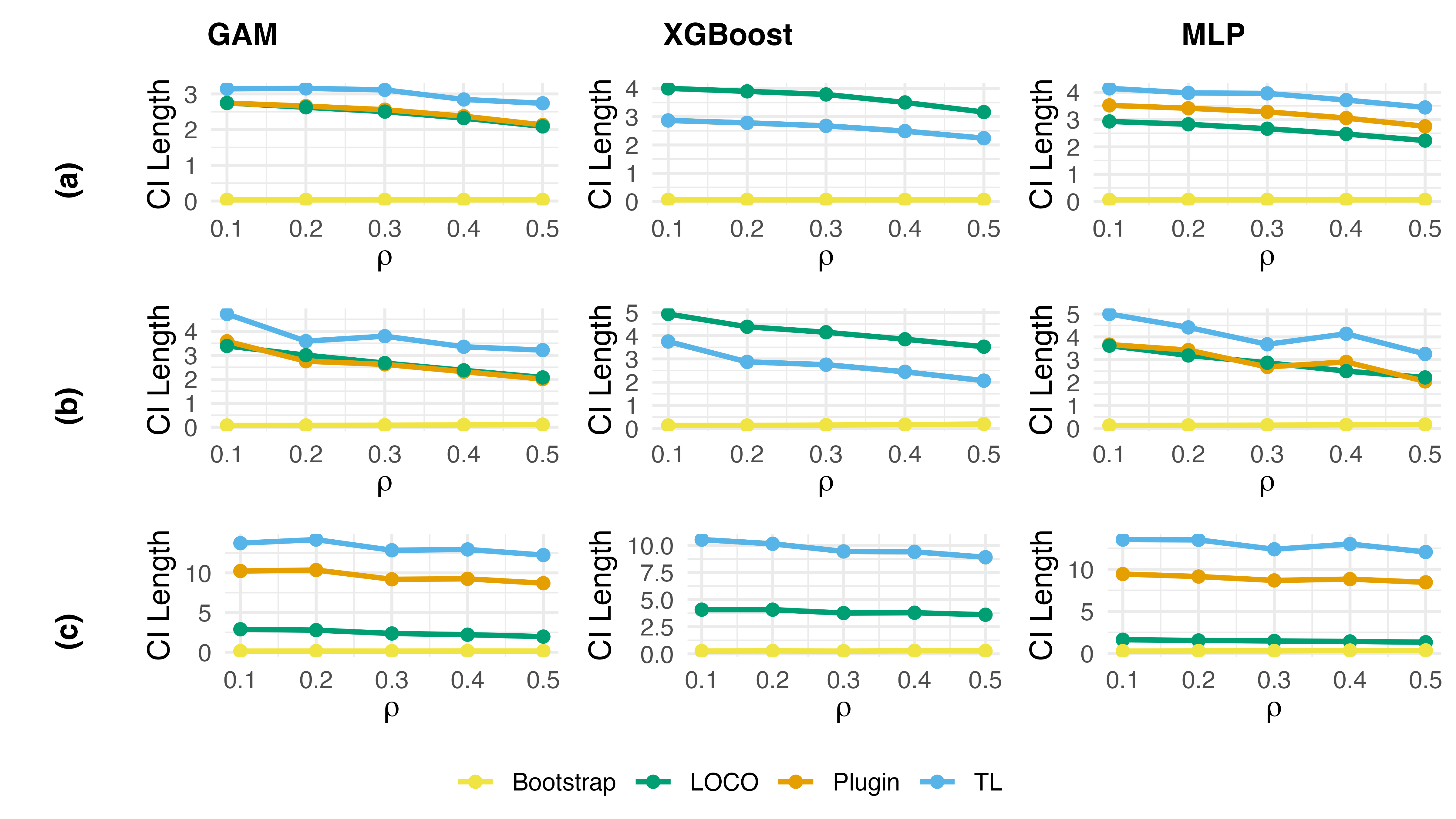}
\caption{Bias, Coverage and Length of Confidence Intervals of targeted learning and plug-in estimators using three different initial estimators: General Additive Model (left), XGBoost (middle), and Multi-Layer Perceptron (right) based on 240 simulated data, each with 1000 observations}
\label{figure: main simulation}
\end{figure}

\section{Simulation Study}
In this section, we present the simulation results, starting by comparing the bias of our estimator with that of the plug-in estimator \cite{strobl2008conditional}, as well as examining a bootstrap correction to the bias. 

To construct the initial estimator \(\hat{f}\), we consider three models: Generalized Additive Model (GAM) via \texttt{pyGAM} package, Multi-Layer Perceptron (MLP) implemented in \texttt{scikit-learn}, and eXtreme Gradient Boosting (XGBoost) from \texttt{xgboost} package. To mitigate the impact of hyperparameter tuning, we employ the default parameters for both GAM and XGBoost; additional technical details are provided in the supplementary materials. 

For each simulation setting, we generate \(1{,}000\) observations and repeat the entire procedure \(240\) times. The outcome \(y_i\) is generated following one of the following three designs:
\begin{enumerate}[label=(\alph*)]
\item \(\displaystyle y_i = 3 x_{i1} + \epsilon_i\), following \cite{verdinelli2024feature}.
\item \(\displaystyle y_i = 3x_{i1} + x_{i2} + x_{i3} + x_{i4} + x_{i5} + 0x_{i6} + 0.5x_{i7} + 0.8x_{i8} + 1.2x_{i9} + 1.5x_{i10} + \epsilon_i,\) following \cite{hooker2021unrestricted}.
\item \(\displaystyle y_i = 10\sin(x_{i1}) + 10\cos(x_{i2}) + 3\,x_{i3}x_{i6} + 3x_{i10} + \epsilon_i.\)
\end{enumerate}

In all cases\(\epsilon_i \sim \mathcal{N}(0,1)\). Following \cite{hooker2021unrestricted}, the 10-dimensional covariate vector \(X\) is sampled from a multivariate normal distribution with mean vector \(0\) and covariance matrix \(\Sigma\). For setting (a), the covariance matrix is defined as \(\Sigma_{ii} = 1, \Sigma_{12} = \Sigma_{21} = \rho\) and 0 otherwise. In settings (b) and (c), the covariance matrix is \(\Sigma_{ij} = \rho^{|i-j|}\).

Consistent with \cite{verdinelli2024feature} and \cite{hooker2021unrestricted}, we vary \(\rho \in \{0.1, 0.2, 0.3, 0.4, 0.5\}\) to examine the performance of the proposed algorithm under different correlation structures. For settings (a) and (b), the true value is derived theoretically, while for setting (c) it is approximated via Monte Carlo integration.

As illustrated in Figure~\ref{figure: main simulation}, our estimator consistently exhibits lower bias compared to the plug-in estimator and offers a consistently good estimator compared to LOCO and bootstrap estimators. The proposed estimator generally achieves superior coverage compared to the plug-in estimator. This improvement is primarily attributable to a reduction in bias, as evidenced in Figure~\ref{figure: main simulation}. In particular, the plug-in estimator exhibits considerably higher bias than the TL estimator, which in turn leads to substantially compromised coverage. Moreover, when the correlation is strong, the performance of both the initial estimator \(\hat{f}\) and the conditional density estimator deteriorates, potentially violating Assumption~\ref{assumption: convergence}. Although the confidence interval (CI) length for our proposed estimator is slightly longer, it achieves significantly better coverage than the plug-in estimator, indicating superior overall performance. For the XGboost model, the CI lengths for both estimators are nearly identical, making the corresponding lines in the plots indistinguishable. A table of computational costs is provided in the supplementary material.

\subsection{Real World Data Application}
\subsubsection{Bike sharing}

In our real-data application, we examine the variable importance scores for the hourly bike share dataset obtained from the UCI repository \citep{bike_sharing_275}. We employ XGBoost to generate the initial estimates, and the results are presented in Figure~\ref{figure: bikeshare}. 

From the plot, we see that \texttt{workingday} and \texttt{yr} sit well above the rest in terms of importance, suggesting they explain a larger share of the variability in the response than other predictors. Meanwhile, features like holiday and \texttt{weathersit} occupy the next tier of influence, although their bars are noticeably shorter. At the lower end, variables such as month and \texttt{wdspd} barely rise above zero, implying they may add little explanatory power. A noteworthy takeaway is how \texttt{Temp} and \texttt{atemp} rank surprisingly low, despite one might expect temperature-related variables to matter more. Hence, even though many features cluster in a middle range of importance, the disagreements at the extremes illustrate why a nuanced approach to screening (beyond raw importance scores alone) is often necessary for sound statistical analysis.

\begin{figure}[!ht]
  \centering
  \includegraphics[width=.5\textwidth]{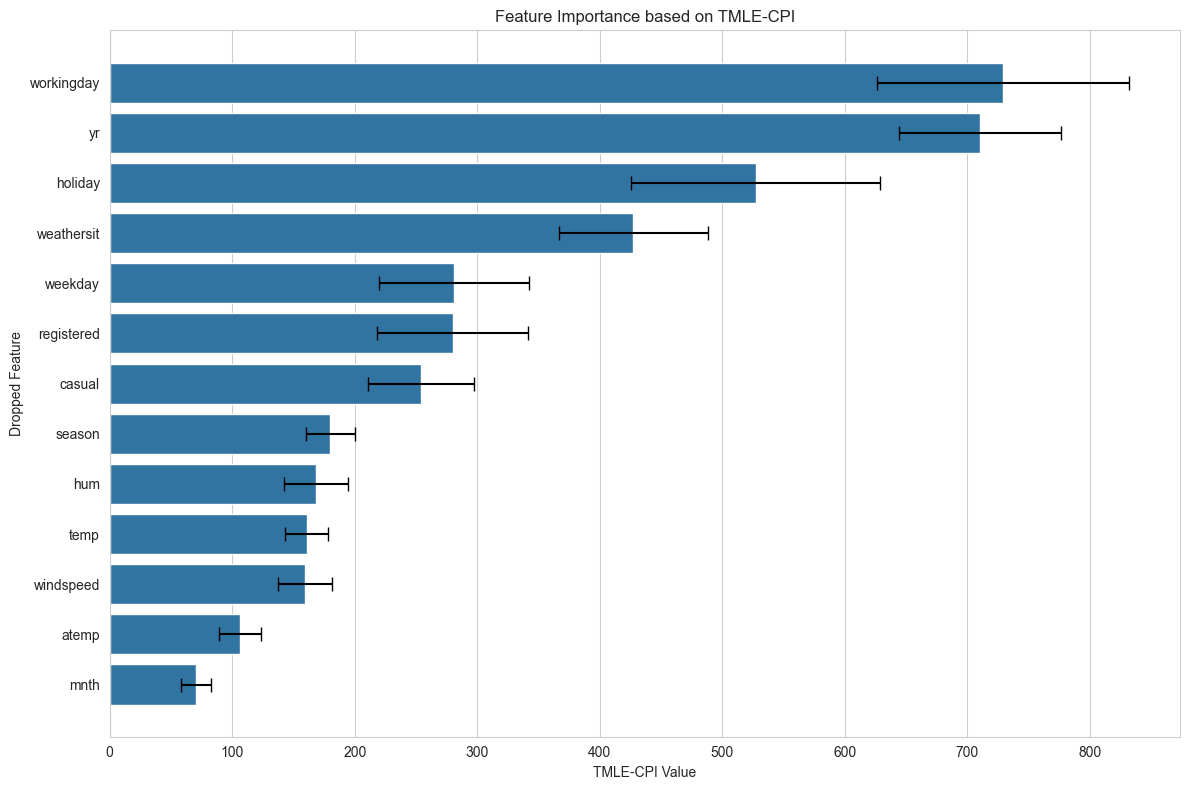}
  \caption{Conditional variable importance scores for the hourly bike share dataset, obtained using TL with an XGBoost-based initial estimate.}
  \label{figure: bikeshare}
\end{figure}

\subsubsection{Wine quality}
In addition, we included the wine quality dataset to illustrate the application of our method in classification settings through the wine quality dataset. We employ random forest to generate the initial estimates.

\begin{figure}[!ht]
  \centering
  \includegraphics[width=.4\textwidth]{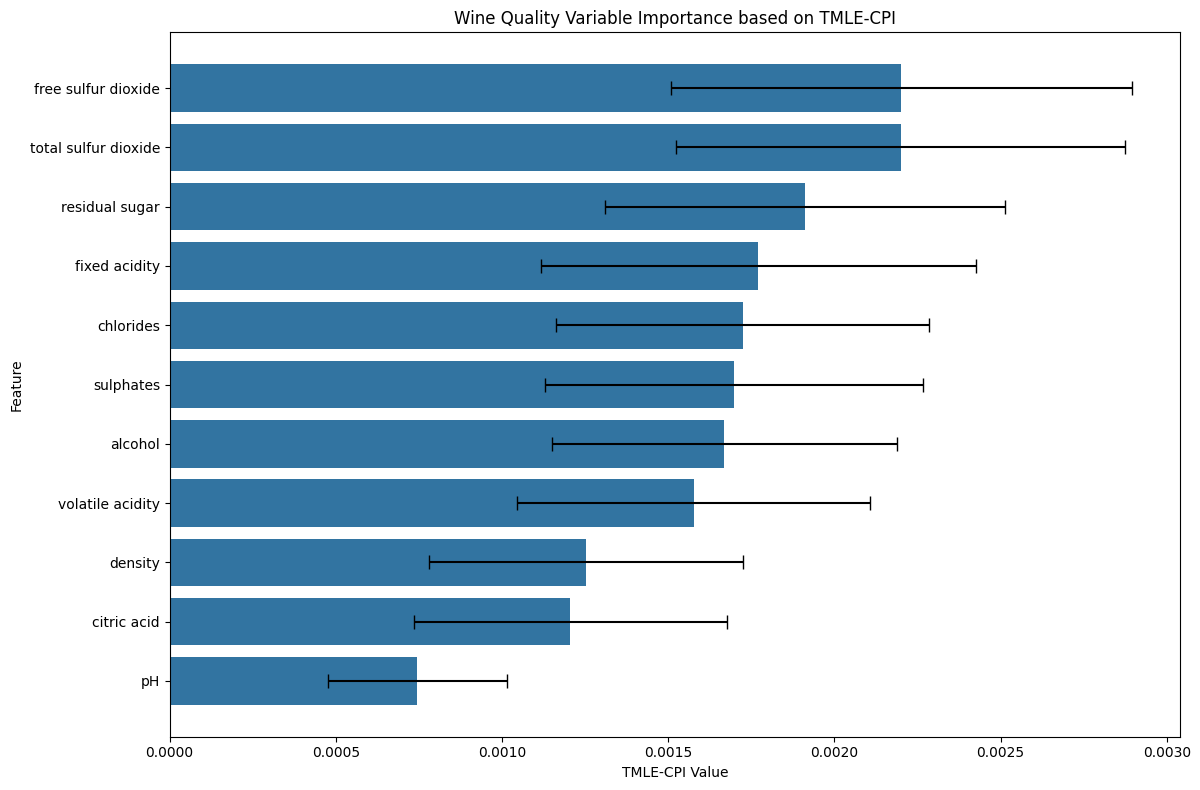}
  \caption{Conditional variable importance scores for the wine quality, obtained using TL with an Random Forest-based initial estimate.}
  \label{figure: winequality}
\end{figure}
From the conditional permutation importance plot, we see that \texttt{free sulfur dioxide} and \texttt{total sulfur dioxide} sit well above the rest in terms of importance, suggesting they explain a larger share of the variability in wine quality than other chemical measures. Meanwhile, \texttt{residual sugar} and \texttt{fixed acidity} occupy the next tier of influence, although their bars are noticeably shorter. In the middle range, variables such as \texttt{chlorides}, \texttt{sulphates}, \texttt{alcohol}, and \texttt{volatile acidity} cluster with moderately high importance, indicating their meaningful but not dominant contribution. Toward the lower end, features like \texttt{density} and \texttt{citric acid} display only modest importance, while \texttt{pH} barely rises above zero, implying it adds little explanatory power in this context. A noteworthy takeaway is how preservative-related variables dominate the ranking even though one might expect acidity or alcohol content to matter more strongly. Here we note that width of our confidence intervals suggest that the data only provide a highly uncertain ranking of variable importance.

\section{Conclusion}

In this paper, we study uncertainty quantification in IML using the targeted learning framework, illustrated through conditional permutation importance. Under mild assumptions, our methodology achieves asymptotic efficiency, maintains comparable computational complexity, and delivers improved finite-sample accuracy. 

Future work includes developing methodology for estimating the overlap model as mentioned in Section~\ref{sec: why cpi?}, since we cannot factorize the subspace into orthogonal ones. It is also interesting to consider problems involving density ratios, which might be more approachable using methods that bypass the calculation of influence function, such as \cite{cho2023kernel,van2024automatic}.

\clearpage
\bibliography{vitmle}

@article{10.1111/ectj.12097,
  author   = {Chernozhukov, Victor and Chetverikov, Denis and Demirer, Mert and Duflo, Esther and Hansen, Christian and Newey, Whitney and Robins, James},
  title    = {{Double/debiased machine learning for treatment and structural parameters}},
  journal  = {The Econometrics Journal},
  volume   = {21},
  number   = {1},
  pages    = {C1-C68},
  year     = {2018},
  month    = {01},
  abstract = {{We revisit the classic semi‐parametric problem of inference on a low‐dimensional parameter θ0 in the presence of high‐dimensional nuisance parameters η0. We depart from the classical setting by allowing for η0 to be so high‐dimensional that the traditional assumptions (e.g. Donsker properties) that limit complexity of the parameter space for this object break down. To estimate η0, we consider the use of statistical or machine learning (ML) methods, which are particularly well suited to estimation in modern, very high‐dimensional cases. ML methods perform well by employing regularization to reduce variance and trading off regularization bias with overfitting in practice. However, both regularization bias and overfitting in estimating η0 cause a heavy bias in estimators of θ0 that are obtained by naively plugging ML estimators of η0 into estimating equations for θ0. This bias results in the naive estimator failing to be N−1/2 consistent, where N is the sample size. We show that the impact of regularization bias and overfitting on estimation of the parameter of interest θ0 can be removed by using two simple, yet critical, ingredients: (1) using Neyman‐orthogonal moments/scores that have reduced sensitivity with respect to nuisance parameters to estimate θ0; (2) making use of cross‐fitting, which provides an efficient form of data‐splitting. We call the resulting set of methods double or debiased ML (DML). We verify that DML delivers point estimators that concentrate in an N−1/2‐neighbourhood of the true parameter values and are approximately unbiased and normally distributed, which allows construction of valid confidence statements. The generic statistical theory of DML is elementary and simultaneously relies on only weak theoretical requirements, which will admit the use of a broad array of modern ML methods for estimating the nuisance parameters, such as random forests, lasso, ridge, deep neural nets, boosted trees, and various hybrids and ensembles of these methods. We illustrate the general theory by applying it to provide theoretical properties of the following: DML applied to learn the main regression parameter in a partially linear regression model; DML applied to learn the coefficient on an endogenous variable in a partially linear instrumental variables model; DML applied to learn the average treatment effect and the average treatment effect on the treated under unconfoundedness; DML applied to learn the local average treatment effect in an instrumental variables setting. In addition to these theoretical applications, we also illustrate the use of DML in three empirical examples.}}
}

@inproceedings{benkeser2016highly,
  title        = {The highly adaptive lasso estimator},
  author       = {Benkeser, David and {v}an {d}er Laan, Mark},
  booktitle    = {2016 IEEE international conference on data science and advanced analytics (DSAA)},
  pages        = {689--696},
  year         = {2016},
  organization = {IEEE}
}

@book{bickel1993efficient,
  title     = {Efficient and adaptive estimation for semiparametric models},
  author    = {Bickel, Peter J and Klaassen, Chris AJ and Bickel, Peter J and Ritov, Ya’acov and Klaassen, J and Wellner, Jon A and Ritov, YA'Acov},
  volume    = {4},
  year      = {1993},
  publisher = {Springer}
}

@article{bickler2021machine,
  title     = {Machine learning arrives in archaeology},
  author    = {Bickler, Simon H},
  journal   = {Advances in Archaeological Practice},
  volume    = {9},
  number    = {2},
  pages     = {186--191},
  year      = {2021},
  publisher = {Cambridge University Press}
}

@misc{bike_sharing_275,
  author       = {Fanaee-T, Hadi},
  title        = {{Bike Sharing}},
  year         = {2013},
  howpublished = {UCI Machine Learning Repository},
  note         = {{DOI}: https://doi.org/10.24432/C5W894}
}

@article{booth1998monte,
  title     = {Monte Carlo approximation of bootstrap variances},
  author    = {Booth, James G and Sarkar, Somnath},
  journal   = {The American Statistician},
  volume    = {52},
  number    = {4},
  pages     = {354--357},
  year      = {1998},
  publisher = {Taylor \& Francis}
}

@article{breiman2001random,
  title     = {Random forests},
  author    = {Breiman, Leo},
  journal   = {Machine learning},
  volume    = {45},
  pages     = {5--32},
  year      = {2001},
  publisher = {Springer}
}

@article{cai2020one,
  title     = {One-step targeted maximum likelihood estimation for time-to-event outcomes},
  author    = {Cai, Weixin and {v}an {d}er Laan, Mark J},
  journal   = {Biometrics},
  volume    = {76},
  number    = {3},
  pages     = {722--733},
  year      = {2020},
  publisher = {Oxford University Press}
}

@article{chamma2024statistically,
  title   = {Statistically valid variable importance assessment through conditional permutations},
  author  = {Chamma, Ahmad and Engemann, Denis A and Thirion, Bertrand},
  journal = {Advances in Neural Information Processing Systems},
  volume  = {36},
  year    = {2024}
}

@article{cheng1984strong,
  title     = {Strong consistency of nearest neighbor regression function estimators},
  author    = {Cheng, Philip E},
  journal   = {Journal of Multivariate Analysis},
  volume    = {15},
  number    = {1},
  pages     = {63--72},
  year      = {1984},
  publisher = {Elsevier}
}

@inproceedings{cho2023kernel,
  title     = {Kernel Debiased Plug-in Estimation: Simultaneous, Automated Debiasing without Influence Functions for Many Target Parameters},
  author    = {Cho, Brian M and Mukhin, Yaroslav and Gan, Kyra and Malenica, Ivana},
  booktitle = {Forty-first International Conference on Machine Learning},
  year      = {2023}
}

@article{diciccio1996bootstrap,
  title     = {Bootstrap confidence intervals},
  author    = {DiCiccio, Thomas J and Efron, Bradley},
  journal   = {Statistical science},
  volume    = {11},
  number    = {3},
  pages     = {189--228},
  year      = {1996},
  publisher = {Institute of Mathematical Statistics}
}

@article{donnelly2023rashomon,
  title   = {The rashomon importance distribution: Getting rid of unstable, single model-based variable importance},
  author  = {Donnelly, Jon and Katta, Srikar and Rudin, Cynthia and Browne, Edward},
  journal = {Advances in Neural Information Processing Systems},
  volume  = {36},
  pages   = {6267--6279},
  year    = {2023}
}

@article{farrell2018deep,
  title   = {Deep neural networks for estimation and inference: application to causal effects and other semiparametric estimands},
  author  = {Farrell, Max H and Liang, Tengyuan and Misra, Sanjog},
  journal = {arXiv preprint arXiv:1809.09953},
  volume  = {20},
  year    = {2018}
}

@article{fauvel2025sobol,
  title   = {Sobol-CPI: a Doubly Robust Conditional Permutation Importance Statistic},
  author  = {Fauvel, Kevin and Morvan, Marine and Cadre, Baptiste and Subtil, François},
  journal = {arXiv preprint arXiv:2501.17520},
  year    = {2025}
}

@article{fisher2019all,
  title   = {All models are wrong, but many are useful: Learning a variable's importance by studying an entire class of prediction models simultaneously},
  author  = {Fisher, Aaron and Rudin, Cynthia and Dominici, Francesca},
  journal = {Journal of Machine Learning Research},
  volume  = {20},
  number  = {177},
  pages   = {1--81},
  year    = {2019}
}

@article{freiesleben2024scientific,
  title     = {Scientific Inference with Interpretable Machine Learning: Analyzing Models to Learn About Real-World Phenomena},
  author    = {Freiesleben, Timo and K{\"o}nig, Gunnar and Molnar, Christoph and Tejero-Cantero, {\'A}lvaro},
  journal   = {Minds and Machines},
  volume    = {34},
  number    = {3},
  pages     = {32},
  year      = {2024},
  publisher = {Springer}
}

@article{gan2022model,
  title   = {Model-agnostic confidence intervals for feature importance: A fast and powerful approach using minipatch ensembles},
  author  = {Gan, Luqin and Zheng, Lili and Allen, Genevera I},
  journal = {arXiv preprint arXiv:2206.02088},
  year    = {2022}
}

@article{gromping2007estimators,
  title     = {Estimators of relative importance in linear regression based on variance decomposition},
  author    = {Gr{\"o}mping, Ulrike},
  journal   = {The American Statistician},
  volume    = {61},
  number    = {2},
  pages     = {139--147},
  year      = {2007},
  publisher = {Taylor \& Francis}
}

@article{hines2022demystifying,
  title     = {Demystifying statistical learning based on efficient influence functions},
  author    = {Hines, Oliver and Dukes, Oliver and Diaz-Ordaz, Karla and Vansteelandt, Stijn},
  journal   = {The American Statistician},
  volume    = {76},
  number    = {3},
  pages     = {292--304},
  year      = {2022},
  publisher = {Taylor \& Francis}
}

@article{hooker2017machine,
  title   = {Machine learning and the future of realism},
  author  = {Hooker, Giles and Hooker, Cliff},
  journal = {arXiv preprint arXiv:1704.04688},
  year    = {2017}
}

@article{hooker2021unrestricted,
  title     = {Unrestricted permutation forces extrapolation: variable importance requires at least one more model, or there is no free variable importance},
  author    = {Hooker, Giles and Mentch, Lucas and Zhou, Siyu},
  journal   = {Statistics and Computing},
  volume    = {31},
  pages     = {1--16},
  year      = {2021},
  publisher = {Springer}
}

@article{kennedy2022semiparametric,
  title   = {Semiparametric doubly robust targeted double machine learning: a review},
  author  = {Kennedy, Edward H},
  journal = {arXiv preprint arXiv:2203.06469},
  year    = {2022}
}

@article{lei2018distribution,
  title     = {Distribution-free predictive inference for regression},
  author    = {Lei, Jing and G’Sell, Max and Rinaldo, Alessandro and Tibshirani, Ryan J and Wasserman, Larry},
  journal   = {Journal of the American Statistical Association},
  volume    = {113},
  number    = {523},
  pages     = {1094--1111},
  year      = {2018},
  publisher = {Taylor \& Francis}
}

@article{lu2021unified,
  title   = {A unified framework for random forest prediction error estimation},
  author  = {Lu, Benjamin and Hardin, Johanna},
  journal = {Journal of Machine Learning Research},
  volume  = {22},
  number  = {8},
  pages   = {1--41},
  year    = {2021}
}

@article{lu2023using,
  title     = {Using machine learning to predict urban canopy flows for land surface modeling},
  author    = {Lu, Yanle and Zhou, Xu-Hui and Xiao, Heng and Li, Qi},
  journal   = {Geophysical Research Letters},
  volume    = {50},
  number    = {1},
  pages     = {e2022GL102313},
  year      = {2023},
  publisher = {Wiley Online Library}
}

@article{luedtke2016optimal,
  title     = {Optimal targeted learning: confidence intervals for a median parameter},
  author    = {Luedtke, Alexander R and {v}an {d}er Laan, Mark J},
  journal   = {Statistical Methods in Medical Research},
  volume    = {25},
  number    = {3},
  pages     = {897--917},
  year      = {2016},
  publisher = {SAGE Publications Sage UK: London, England}
}

@article{luo2016high,
  title   = {High-Dimensional $ L\_2 $ Boosting: Rate of Convergence},
  author  = {Luo, Ye and Spindler, Martin and K{\"u}ck, Jannis},
  journal = {arXiv preprint arXiv:1602.08927},
  year    = {2016}
}

@article{molina2019machine,
  title     = {Machine learning for sociology},
  author    = {Molina, Mario and Garip, Filiz},
  journal   = {Annual Review of Sociology},
  volume    = {45},
  number    = {1},
  pages     = {27--45},
  year      = {2019},
  publisher = {Annual Reviews}
}

@book{molnar2020interpretable,
  title     = {Interpretable machine learning},
  author    = {Molnar, Christoph},
  year      = {2020},
  publisher = {Lulu. com}
}

@inproceedings{molnar2023relating,
  title        = {Relating the partial dependence plot and permutation feature importance to the data generating process},
  author       = {Molnar, Christoph and Freiesleben, Timo and K{\"o}nig, Gunnar and Herbinger, Julia and Reisinger, Tim and Casalicchio, Giuseppe and Wright, Marvin N and Bischl, Bernd},
  booktitle    = {World Conference on Explainable Artificial Intelligence},
  pages        = {456--479},
  year         = {2023},
  organization = {Springer}
}

@article{newey2018cross,
  title   = {Cross-fitting and fast remainder rates for semiparametric estimation},
  author  = {Newey, Whitney K and Robins, James R},
  journal = {arXiv preprint arXiv:1801.09138},
  year    = {2018}
}

@article{ning2017general,
  title   = {A general theory of hypothesis tests and confidence regions for sparse high dimensional models},
  author  = {Ning, Yang and Liu, Han},
  year    = {2017},
  journal = {The Annals of Statistics}
}

@article{smith2023application,
  title     = {Application of targeted maximum likelihood estimation in public health and epidemiological studies: a systematic review},
  author    = {Smith, Matthew J and Phillips, Rachael V and Luque-Fernandez, Miguel Angel and Maringe, Camille},
  journal   = {Annals of epidemiology},
  volume    = {86},
  pages     = {34--48},
  year      = {2023},
  publisher = {Elsevier}
}

@article{strobl2008conditional,
  title     = {Conditional variable importance for random forests},
  author    = {Strobl, Carolin and Boulesteix, Anne-Laure and Kneib, Thomas and Augustin, Thomas and Zeileis, Achim},
  journal   = {BMC bioinformatics},
  volume    = {9},
  pages     = {1--11},
  year      = {2008},
  publisher = {Springer}
}

@book{van2000asymptotic,
  title     = {Asymptotic statistics},
  author    = {{v}an {d}er Vaart, Aad W},
  volume    = {3},
  year      = {2000},
  publisher = {Cambridge university press}
}

@article{van2006statistical,
  title     = {Statistical inference for variable importance},
  author    = {{v}an {d}er Laan, Mark J},
  journal   = {The International Journal of Biostatistics},
  volume    = {2},
  number    = {1},
  year      = {2006},
  publisher = {De Gruyter}
}

@article{van2006targeted,
  title     = {Targeted maximum likelihood learning},
  author    = {{v}an {d}er Laan, Mark J and Rubin, Daniel},
  journal   = {The international journal of biostatistics},
  volume    = {2},
  number    = {1},
  year      = {2006},
  publisher = {De Gruyter}
}

@article{van2011cross,
  title     = {Cross-validated targeted minimum-loss-based estimation},
  author    = {{v}an {d}er Laan, Mark J and Rose, Sherri and Zheng, Wenjing},
  journal   = {Targeted learning: causal inference for observational and experimental data},
  pages     = {459--474},
  year      = {2011},
  publisher = {Springer}
}

@book{van2011targeted,
  title     = {Targeted learning: causal inference for observational and experimental data},
  author    = {{v}an {d}er Laan, Mark J and Rose, Sherri and others},
  volume    = {4},
  year      = {2011},
  publisher = {Springer}
}

@article{van2024automatic,
  title   = {Automatic doubly robust inference for linear functionals via calibrated debiased machine learning},
  author  = {{v}an {d}er Laan, Lars and Luedtke, Alex and Carone, Marco},
  journal = {arXiv preprint arXiv:2411.02771},
  year    = {2024}
}

@article{verdinelli2024decorrelated,
  title   = {Decorrelated variable importance},
  author  = {Verdinelli, Isabella and Wasserman, Larry},
  journal = {Journal of Machine Learning Research},
  volume  = {25},
  number  = {7},
  pages   = {1--27},
  year    = {2024}
}

@article{verdinelli2024feature,
  title     = {Feature importance: A closer look at shapley values and loco},
  author    = {Verdinelli, Isabella and Wasserman, Larry},
  journal   = {Statistical Science},
  volume    = {39},
  number    = {4},
  pages     = {623--636},
  year      = {2024},
  publisher = {Institute of Mathematical Statistics}
}

@article{wei2023efficient,
  title     = {Efficient targeted learning of heterogeneous treatment effects for multiple subgroups},
  author    = {Wei, Waverly and Petersen, Maya and {v}an {d}er Laan, Mark J and Zheng, Zeyu and Wu, Chong and Wang, Jingshen},
  journal   = {Biometrics},
  volume    = {79},
  number    = {3},
  pages     = {1934--1946},
  year      = {2023},
  publisher = {Wiley Online Library}
}

@inproceedings{williamson2020efficient,
  title        = {Efficient nonparametric statistical inference on population feature importance using Shapley values},
  author       = {Williamson, Brian and Feng, Jean},
  booktitle    = {International conference on machine learning},
  pages        = {10282--10291},
  year         = {2020},
  organization = {PMLR}
}

@article{williamson2021nonparametric,
  title     = {Nonparametric variable importance assessment using machine learning techniques},
  author    = {Williamson, Brian D and Gilbert, Peter B and Carone, Marco and Simon, Noah},
  journal   = {Biometrics},
  volume    = {77},
  number    = {1},
  pages     = {9--22},
  year      = {2021},
  publisher = {Oxford University Press}
}

@article{williamson2023general,
  title     = {A general framework for inference on algorithm-agnostic variable importance},
  author    = {Williamson, Brian D and Gilbert, Peter B and Simon, Noah R and Carone, Marco},
  journal   = {Journal of the American Statistical Association},
  volume    = {118},
  number    = {543},
  pages     = {1645--1658},
  year      = {2023},
  publisher = {Taylor \& Francis}
}

@article{wolock2023nonparametric,
  title   = {Nonparametric variable importance for time-to-event outcomes with application to prediction of HIV infection},
  author  = {Wolock, Charles J and Gilbert, Peter B and Simon, Noah and Carone, Marco},
  journal = {arXiv preprint arXiv:2311.12726},
  year    = {2023}
}

@article{mentch2022getting,
  title={Getting better from worse: Augmented bagging and a cautionary tale of variable importance},
  author={Mentch, Lucas and Zhou, Siyu},
  journal={Journal of Machine Learning Research},
  volume={23},
  number={224},
  pages={1--32},
  year={2022}
}

\newpage

\onecolumn

\title{Targeted Learning for Variable Importance\\Supplementary Material}
\maketitle

\appendix
Here, we present the proof of our main theorem, along with additional simulation results that could not be included in the main text due to space constraints. We begin by providing more details of simulation results.

\section{Simulations And Real World Data}
\subsection{Technical Details: Parameter Selection implementation details}
In this section, we mostly adopt default parameter settings to maintain consistency with \texttt{R} and to avoid any performance improvements arising from parameter tuning. Specifically, for the Generalized Additive Model (GAM), we use the default parameters. For the Random Forest model employed in conditional density estimation, we utilize the default settings provided by \texttt{randomForest} package in \texttt{R}. In the case of XGBoost, we adjust the number of estimators to match that of the Random Forest. Lastly, for the MLP regressor, we design a two-layer network with 64 neurons in the first layer and 32 neurons in the second layer, applying the ReLU activation function for non-linearity and the Adam optimizer for weight updates. Training is configured to run for a maximum of 3000 iterations to ensure convergence.

To estimate densities $p(x|Z)$ and $p(y|Z)$ we started by fitting a random forest of $B$ trees to predict each from $Z$ using $I_1$.  We then use this to provide an initial estimate based on a weighted empirical distribution from $I_1$.  $P(y=Y_i|Z) = w_i(Z)$ calculated as the fraction of trees for which $(Z_i,Y_i)$ was out-of-bag, in which $Z$ and $Z_i$ fall into the same leaf.  In the targeted learning update, the $w_i$ are multiplied by $(1+ \hat{\epsilon}) \psi_{\hat{P}}(X_i,Y_i,Z_i)$ allow us to keep track of the updated distribution, and later apply it to $I_3$. The same procedure was employed to update $p(x|Z)$.

For settings (a) and (b), theoretical calculations based on Theorem 2 in \cite{hooker2021unrestricted} show that the true value is \(9(1-\rho^2)\). In contrast, due to the complexity of the nonlinear model in setting (c), the true value is estimated via Monte Carlo integration.

\subsection{Computational cost comparison}
\begin{table}[h!]
  \centering
  \caption{A Comparison of Computational Runtimes}
  \label{tab:comp_costs_simple}
    \begin{tabular}{l S[table-format=2.2]}
      \toprule
      \textbf{Method} & {\textbf{Runtime (\si{\second})}} \\
      \midrule
      Plug-in   & 1.52 \\
      TL        & 3.47 \\
      Bootstrap & 57.20 \\
      \bottomrule
    \end{tabular}
    \par
    \small{\textit{Note:} Runtimes were measured on a machine with eighty Intel® Xeon® Gold 6230 CPUs @ 2.10GHz.}
\end{table}

\section{Proof}
To facilitate these proofs, we first introduce the necessary notation.
\subsection{Notations}
We use \(\pr_n\) to denote the empirical measure, that is, suppose \(f: \mathcal{X} \to \R \), \(\pr_n(f) = \frac{1}{n}\sum_{i=1}^n f(X_i)\). In contrast, we use \(\pr\) to denote the probability measure, that is, \(\pr(f) = \int f(X) d\pr\). \(L_2^0(P)\) denotes the collection of functions such that \(P f =0 \) and \(Pf^2 < \infty\). \(O_P\) and \(o_P\) are used as follows: \(X_n = O_P(r_n)\) denotes \(X_n/r_n\) is bounded in probability and \(X_n = o_P(r_n)\) indicates \(X_n/r_n \converge[P] 0\), respectively. Additionally, we denote the \(L_2(P)\) norm as \(\|\cdot\|\). In addition, we denote the conditional mean of \(y\) given \(x\) as \(\hat{y}(x) \equiv \E[y|X = x]\). We assume that the dataset \(I\) is divided into three mutually exclusive sets \(I_1, I_2, I_3\). To facilitate the calculation of the efficient influence function, we denote \(\delta_{O}(o)\) as the Dirac delta function with respect to \(O\), i.e., the density of an idealized point mass at \(O\), which equals zero everywhere except at \(O\) and integrates to 1.

\subsection{Efficient Influence Function}
We note that the derivation of the efficient influence function largely follows the work of \cite{hines2022demystifying}, adopting the “point mass contamination” methodology.

\begin{lemma}\label{lemma: reflosseif}
  Let 
  \[\Psi_0(X, Y, Z) = \E\left[L(Y, \hat{y}(X, Z))\right].\]
  The efficient influence function is:
  \begin{align*}
\psi_0(X, Y, Z) &= (Y - \hat{y}(X, Z)) \int L'(y, \hat{y}(X, Z)) P(y|X, Z) dy \\
&\quad + L(Y, \hat{y}(X, Z)) - \Psi_0(P).
  \end{align*}
\end{lemma}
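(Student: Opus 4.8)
The plan is to obtain $\psi_0$ as a Gateaux derivative of $\Psi_0$ along a point-mass contamination path, following \cite{hines2022demystifying}. Fix a generic observation $o_0 = (x_0, y_0, z_0)$ and consider the path $P_\epsilon = (1-\epsilon)P + \epsilon\,\delta_{o_0}$. Since $\mathcal{M}$ is the nonparametric model, the tangent space is all of $L_2^0(P)$ and the gradient of $\Psi_0$ is unique, hence equal to the efficient influence function; it therefore suffices to exhibit a function $\psi_0$ with $\E_P[\psi_0] = 0$ and $\frac{d}{d\epsilon}\Psi_0(P_\epsilon)\big|_{\epsilon = 0} = \psi_0(o_0)$. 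Here $\hat{y}$ must be read as the population conditional mean $\hat{y}_P(x,z) = \E_P[Y\,|\,X = x, Z = z]$, so it too is a functional of $P$.

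Writing $\Psi_0(P_\epsilon) = \int L(y, \hat{y}_{P_\epsilon}(x,z))\, dP_\epsilon(x,y,z)$, the parameter $\epsilon$ enters both through the mixing measure $dP_\epsilon$ and through the conditional mean $\hat{y}_{P_\epsilon}$, so the product rule gives two contributions. The ``outer'' contribution is the standard one, $\frac{d}{d\epsilon}\int g\, dP_\epsilon\big|_{0} = g(o_0) - \E_P[g]$ with $g(x,y,z) = L(y, \hat{y}(x,z))$, producing $L(y_0, \hat{y}(x_0,z_0)) - \Psi_0(P)$. For the ``inner'' contribution one needs the Gateaux derivative of the conditional-mean map, which a direct quotient-rule computation gives as $\frac{d}{d\epsilon}\hat{y}_{P_\epsilon}(x,z)\big|_0 = \frac{\mathbbm{1}\{(x,z) = (x_0,z_0)\}}{p(x,z)}\,\bigl(y_0 - \hat{y}(x_0,z_0)\bigr)$; substituting this into $\int L'(y,\hat{y}(x,z))\,\partial_\epsilon \hat{y}_{P_\epsilon}(x,z)\,dP(x,y,z)$, the indicator collapses the outer integral to the fibre over $(x_0,z_0)$ and leaves $\bigl(y_0 - \hat{y}(x_0,z_0)\bigr)\int L'(y, \hat{y}(x_0,z_0))\, p(y\,|\,x_0,z_0)\, dy$. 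Adding the two pieces and renaming $o_0$ as $(X,Y,Z)$ gives exactly the claimed formula; $\E_P[\psi_0] = 0$ then holds because $\E_P[\,Y - \hat{y}(X,Z)\,|\,X,Z] = 0$ centres the first summand and $\E_P[L(Y,\hat{y}(X,Z))] = \Psi_0(P)$ cancels the rest, so $\psi_0 \in L_2^0(P)$.

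The step that needs care is the inner contribution: the object $\mathbbm{1}\{(x,z) = (x_0,z_0)\}/p(x,z)$ is the formal point-mass influence function of a conditional expectation, and its appearance must be justified. The cleanest rigorous route is to instead verify the defining identity of Definition~\ref{definition: pairwise differentiable} on a regular one-dimensional submodel $\{P_t\}$ with score $h \in L_2^0(P)$: here $\frac{d}{dt}\hat{y}_{P_t}(x,z)\big|_0 = \E_P[(Y - \hat{y}(x,z))\,h(X,Y,Z)\,|\,X = x, Z = z]$ is a genuine $L_2$ statement, and since $g(x,z) := \int L'(y,\hat{y}(x,z))\,p(y\,|\,x,z)\,dy$ depends only on $(x,z)$, the inner term becomes $\E_P[g(X,Z)(Y - \hat{y}(X,Z))\,h]$ by iterated expectations, so that $\frac{d}{dt}\Psi_0(P_t)\big|_0 = \E_P\bigl[\bigl(L(Y,\hat{y}(X,Z)) + g(X,Z)(Y - \hat{y}(X,Z))\bigr)h\bigr]$; because $\E_P[h] = 0$ we may subtract $\Psi_0(P)$ at no cost, recovering $\psi_0$. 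Justifying the differentiation under the integral sign (dominated convergence, using integrability or boundedness of $L$ and $L'$ against $P(y\,|\,x,z)$) is the only remaining technical point; the product rule on the outer measure and the chain rule with $L'$ are routine bookkeeping. As a sanity check, for $L(y,\hat y) = (y-\hat y)^2$ the factor $g(X,Z)$ vanishes at the true $\hat y$, so $\psi_0$ reduces to $(Y - \hat{y}(X,Z))^2 - \Psi_0(P)$, as expected.
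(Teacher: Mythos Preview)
Your proposal is correct and follows essentially the same approach as the paper: both use the point-mass contamination methodology of \cite{hines2022demystifying}, apply the product rule to split the derivative into an outer contribution from the mixing measure and an inner contribution from the conditional-mean functional, and invoke the known influence function of $\hat y(x,z)$ (Example~6 of \cite{hines2022demystifying}) for the latter. Your write-up is in fact more careful than the paper's, adding the explicit verification that $\E_P[\psi_0]=0$, the alternative derivation via regular submodels to justify the formal Dirac object, and the squared-loss sanity check.
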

\begin{proof}
We start by considering the integration form of the estimand, which can be expressed as:
\begin{align*}
\Psi_0(P)& = \E\left[ L(y, \hat{y}(x, z)) \right]\\
&= \int L(y, \hat{y}(x, z)) P(x, y, z) dxdydz.
\end{align*}

Then, by considering the product rule, we may have:
\begin{align*}
\psi_0(X,Y,Z) & = \int L' \left(y,\hat{y}(x,z)\right)\psi^{\hat{y}(x,z)}(X,Y,Z)P(x,z,y)dxdydz \\ & \hspace{2cm} + \int \int L \left(y,\hat{y}(x,z)\right)\left[ \delta_{XYZ}(x,y,z) - P(x,y,z) \right]dxdyz,
\end{align*}
where \(L'\left(y, \hat{y}(x,z)\right)\) is the derivative of \(L\left(y, \hat{y}(x,z)\right)\), and \(\psi^{\hat{y}(x,z)}(X,Y,Z)\) is the efficient influence function of \(\hat{y}(x,z)\).

From example 6 of \cite{hines2022demystifying}, we have \(\psi^{\hat{y}(x,z)}(X,Y,Z) = (Y - \hat{y}(x,z))\frac{\delta_{{X}, {Z}}(x, z)}{P(x, z)}\).

Then, we have:
\begin{align*}
 \psi_0(X,Y,Z) =   (Y - \hat{y}(X,Z)) \int L'(y,\hat{y}(X,Z))  P(y|X,Z) dy  + L(Y,\hat{y}(X,Z)) - \Psi_0(P)
\end{align*}
\end{proof}

\begin{lemma}
  Let 
  \[\Psi^{C}_0(X, Y, Z) = \E\left[L(Y, \hat{y}(X^C, Z))\right].\]
  The efficient influence function is:
  \begin{align*}
\psi^{C}_0(X, Y, Z) &= \int L'(y, \hat{y}(X, Z))(Y - \hat{y}(X, Z)) p(y|Z) dy \\
&\quad + \int L\left(y, \hat{y}(X, Z)\right) p(y|Z) dy \\
&\quad - \int L\left(y, \hat{y}(x, Z)\right) p(y|Z) p(x|Z) dx dy \\
&\quad + \int L\left(Y, \hat{y}(x, Z)\right) p(x|Z) dx - \Psi^{C}_0(P).
  \end{align*}
\end{lemma}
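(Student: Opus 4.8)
The plan is to follow the same point-mass contamination recipe used in the proof of Lemma~\ref{lemma: reflosseif}. First I would put the estimand into integral form, exploiting the defining independence structure of the conditional permutation copy: since $X^C \sim X\mid Z$ and $X^C \perp (X,Y)\mid Z$, the law of $(Y,X^C,Z)$ factorizes as $p(z)\,p(y|z)\,p(x|z)$, so
\begin{align*}
  \Psi^{C}_0(P) = \int L\bigl(y,\hat{y}(x,z)\bigr)\, p(x|z)\, p(y|z)\, p(z)\, dx\, dy\, dz,
\end{align*}
where $\hat{y}(x,z)=\E[Y\mid X=x,Z=z]$ is itself a functional of $P$. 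The integrand depends on $P$ through exactly four ``pieces'': the inner regression $\hat{y}(x,z)$ and the three densities $p(x|z)$, $p(y|z)$, $p(z)$.

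Next I would contaminate, $P_\varepsilon=(1-\varepsilon)P+\varepsilon\,\delta_{XYZ}$, and differentiate at $\varepsilon=0$ by the product rule, producing one term per piece. For the regression piece I reuse the sub-functional influence function already invoked in Lemma~\ref{lemma: reflosseif} (Example 6 of \cite{hines2022demystifying}), namely $\psi^{\hat{y}(x,z)}(X,Y,Z)=(Y-\hat{y}(x,z))\,\delta_{X,Z}(x,z)/P(x,z)$; for the density pieces I would use the standard contamination derivatives $\tfrac{d}{d\varepsilon}p_\varepsilon(z)\big|_0=\delta_Z(z)-p(z)$ and, for the conditionals, $\tfrac{d}{d\varepsilon}p_\varepsilon(x|z)\big|_0=\delta_{X,Z}(x,z)/p(z)-p(x|z)\,\delta_Z(z)/p(z)$ (and likewise for $p(y|z)$). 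Substituting these into the four product-rule terms and collapsing integrals via the sifting property of the point masses would give the regression contribution $\int L'(y,\hat{y}(X,Z))(Y-\hat{y}(X,Z))p(y|Z)\,dy$, the $p(x|z)$ contribution $\int L(y,\hat{y}(X,Z))p(y|Z)\,dy-\int L(y,\hat{y}(x,Z))p(x|Z)p(y|Z)\,dx\,dy$, the $p(y|z)$ contribution $\int L(Y,\hat{y}(x,Z))p(x|Z)\,dx-\int L(y,\hat{y}(x,Z))p(x|Z)p(y|Z)\,dx\,dy$, and the $p(z)$ contribution $\int L(y,\hat{y}(x,Z))p(x|Z)p(y|Z)\,dx\,dy-\Psi^{C}_0(P)$. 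Summing the four, the three copies of the ``doubly-marginalized'' term carry coefficients $-1,-1,+1$ and collapse to a single such term, yielding exactly the asserted expression; since $\mathcal{M}$ is nonparametric, this pathwise derivative is the efficient influence function (cf.\ the discussion following the von Mises expansion \eqref{equation: von Mises}).

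The main obstacle is the bookkeeping around the two conditional densities: because $p(x|z)$ and $p(y|z)$ each contribute a $\delta_Z$ term through their $p(z)$ denominator, one must carefully combine these with the derivative of the standalone $p(z)$ factor and check that the redundant cross terms cancel down to the single term appearing in the statement. A secondary point is consistency of the chosen factorization of the integrating measure (writing it as $p(x|z)p(y|z)p(z)$, equivalently $p(x,z)p(y,z)/p(z)$) while simultaneously differentiating $\hat{y}(x,z)$, which is itself a density ratio: as in Lemma~\ref{lemma: reflosseif}, treating $\hat{y}$ as a single named sub-functional with its own influence function, rather than expanding it in terms of $p$, avoids double-counting the dependence on $P$.
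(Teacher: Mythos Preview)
Your proposal is correct and follows essentially the same point-mass contamination argument as the paper. The only cosmetic difference is the factorization of the integrating measure: the paper writes it as $p(x',z)\,p(y,z)/p(z)$ and differentiates the three joint/marginal pieces, whereas you factor as $p(x|z)\,p(y|z)\,p(z)$ and differentiate the two conditionals plus the marginal; both routes produce the same four displayed terms after the anticipated cancellation you describe.
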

\begin{proof}
We can start off with the integral form as well, where we shall then get:
\begin{align*}
\Psi^C_0 = \int L(y, \hat{y}(x',z)) P(x'|z)P(x, y, z)dx'dxdydz
\end{align*}

Then, to consider the derivative, we may have:

\begin{align*}
\phi_0^C&= \int \left[ L(y , \hat{y}(x',z) \right]' 
\frac{p(x',z) p(z,y)}{p(z)}  \, dx' \, dz \, dy \, + \int L(y , \hat{y}(x',z') 
\left[ \frac{p(x',z) p(z,y)}{p(z)} \right]' \, dx' \, dy \, dz \, \\
& = R_1  + R_2
\end{align*}

From here, we start with the first term and adopt a similar treatment as Lemma~\ref{lemma: reflosseif}, which yields
\begin{align*}
R_1 &= \int L'(y , \hat{y}(x',z)) (Y - \hat{y}(x', z)) \frac{\delta_{X, Z}(x', z)}{p(x',z)}\frac{p(x',z) p(z,y)}{p(z)} \, dx\, dx' \, dz \, dy \,\\
&= (Y- \hat{y}(X, Z))\int L'(y , \hat{y}(X,Z)\frac{p(y,Z)}{p(Z)} dy
\end{align*}

Then, for the second term, we may have to consider decomposing it into three terms.

\begin{align*}
\int L\left( y , \hat{y}(x', z)\right) \frac{\delta_{X Z}(x', z) - p(x', z)}{p(z)}  p(y, z) \, dy \, dz 
= \int L\left( y , \hat{y}(x', Z) p(y | Z) \right) dy - \Psi^C_0 (P)
\end{align*}

{Also, we may have}:

\begin{align*}
&\int L\left( y , \hat{y}(x', z')\right)  \frac{\delta_{Z}(z) - p(z))}{p(z)^2} p(x',z) p(y, z) \, dx' \, dy \, dz \, dt \\
&= \int L\left( y , \hat{y}(x', Z)\right) p(x'|Z) p(y|Z)  dx \, dy - \Psi(P)\\
& = \int L\left( y , \hat{y}(x, Z)\right) p(x|Z) p(y|Z)  dx \, dy - \Psi^C_0 (P)
\end{align*}

{Lastly, we then have}:

\begin{align*}
&\int L\left( y , \hat{y}(x, z)\right) \frac{p(x',{z})}{p({z})} \left( \delta_{Y,Z}(y, z) - p(y, z) \right)  dx \, dy \, dz \\
&= \int L\left( Y , \hat{y}(x', Z)\right) p(x'|Z)  dx'  - \Psi(P)\\
& = \int L\left( Y , \hat{y}(x, Z)\right) p(x|Z)  dx  - \Psi^C_0 (P)
\end{align*}

{Putting the three terms together, we shall have:}

\begin{align*}
R_2 &= \int L\left( y , \hat{y}(x', Z) p(y | Z) \right) dy  \\
&- \int L\left( y , \hat{y}(x, Z)\right) p(x|Z) p(y|Z)  dx \, dy  \\
&+ \int L\left( Y , \hat{y}(x, Z)\right) p(x|Z)  dx  - 2\Psi^C_0 (P)
\end{align*}

Putting everything together, we may then obtain the desired result.
\end{proof}

\begin{lemma}
  Let 
  \[\Psi^{\pi L}_0(X, Y, Z) = \E\left[L(Y, \hat{y}(X^\pi, Z))\right].\]
  The efficient influence function is:
  \begin{align*}
\psi^{\pi L}_0(X, Y, Z) &= (Y - \hat{y}(X, Z)) \int L'(y, \hat{y}(X, Z)) \frac{P(X) P(y, Z)}{P(X, Z)} dy \\
&\quad + \int L(Y, \hat{y}(x', Z)) P(x') dx' \\
&\quad + \int L(y, \hat{y}(X, Z)) P(y, Z) dy dz - 2 \Psi^{\pi L}_0(P),
  \end{align*}
  where \(X^\pi \sim X\) and \(X^\pi \perp X\).
\end{lemma}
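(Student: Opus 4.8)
The plan is to derive the efficient influence function of $\Psi^{\pi L}_0(P) = \E[L(Y,\hat{y}(X^\pi,Z))]$ using the same ``point mass contamination'' technique employed in the two preceding lemmas. First I would write the estimand in integral form. Since $X^\pi \sim X$ independently of $(X,Z)$ (and hence of $Y$), the joint law of $(X^\pi, Y, Z)$ factors as $P(x')\,P(y,z)$, so
\begin{align*}
    \Psi^{\pi L}_0(P) = \int L(y,\hat{y}(x',z))\,P(x')\,P(y,z)\,dx'\,dy\,dz = \int L(y,\hat{y}(x',z))\,\frac{P(x')\,P(x,z)\,P(y,z)}{P(x,z)}\,dx'\,dx\,dy\,dz,
\end{align*}
where the second form makes the dependence on the full joint $P(x,y,z)$ explicit in a way suited to differentiation along the contamination path $P_\varepsilon = (1-\varepsilon)P + \varepsilon\,\delta_{XYZ}$.

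Next I would apply the product rule to $\frac{d}{d\varepsilon}\Psi^{\pi L}_0(P_\varepsilon)\big|_{\varepsilon=0}$, splitting into a term $R_1$ where the derivative hits $L'$ through $\hat{y}$ (using that the EIF of $\hat{y}(x',z)$ is $(Y-\hat{y}(x',z))\,\delta_{X,Z}(x',z)/P(x',z)$, from Example 6 of \cite{hines2022demystifying}) and a term $R_2$ where the derivative hits the density factor $P(x')P(x,z)P(y,z)/P(x,z)$. For $R_1$, substituting the point mass $\delta_{X,Z}$ collapses the $x',z$ integrals and yields $(Y-\hat{y}(X,Z))\int L'(y,\hat{y}(X,Z))\frac{P(X)P(y,Z)}{P(X,Z)}\,dy$, matching the first line of the claimed formula. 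For $R_2$, I would differentiate the density factor, which decomposes into pieces corresponding to perturbing $P(x')$, perturbing $P(y,z)$, and perturbing $P(x,z)$ (the last entering with a negative sign from the $1/P(x,z)$ denominator); each piece, after inserting the relevant point mass and normalizing, produces one of the remaining plug-in-type terms, collectively giving $\int L(Y,\hat{y}(x',Z))P(x')\,dx' + \int L(y,\hat{y}(X,z))P(y,z)\,dy\,dz - 2\Psi^{\pi L}_0(P)$, where the coefficient $2$ on $\Psi^{\pi L}_0(P)$ accumulates from the centering constants of the two surviving pieces (the $P(x')$ and $P(y,z)$ perturbations) while the $P(x,z)$ perturbation cancels against a cross term.

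The main obstacle I anticipate is bookkeeping in the $R_2$ computation: the density factor $P(x')P(x,z)P(y,z)/P(x,z)$ is not a simple product, so its logarithmic derivative involves several score contributions whose point-mass substitutions each collapse a different subset of integration variables, and one must track carefully which terms reduce to $\Psi^{\pi L}_0(P)$ itself versus which reduce to the genuinely new functionals $\int L(Y,\hat{y}(x',Z))P(x')\,dx'$ and $\int L(y,\hat{y}(X,z))P(y,z)\,dy\,dz$, and in particular verify that the $P(x,z)$-score contribution exactly cancels so that no density-ratio-free leftover survives. As a consistency check at the end, I would confirm $\E_P[\psi^{\pi L}_0(X,Y,Z)] = 0$ and that, when $X \perp Z$, the formula collapses (via $P(X)P(y,Z)/P(X,Z) = P(y|Z)$) toward the structure seen in the conditional-permutation lemma, which is reassuring given the remark in Section~\ref{sec: why cpi?} about density-ratio instability.
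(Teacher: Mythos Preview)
Your approach is essentially the paper's: write the estimand in integral form, apply the point-mass contamination and product rule, and invoke the known EIF of $\hat{y}(x',z)$ from Example~6 of \cite{hines2022demystifying} for the $R_1$ piece. Your $R_1$ computation is correct and matches the paper exactly.

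However, your ``second form'' of the integral,
\[
\int L(y,\hat{y}(x',z))\,\frac{P(x')\,P(x,z)\,P(y,z)}{P(x,z)}\,dx'\,dx\,dy\,dz,
\]
is not a valid rewriting: after cancelling $P(x,z)/P(x,z)=1$ you are left with $\int 1\,dx$, which is not $1$, so this does not equal $\Psi^{\pi L}_0(P)$. This artificial insertion is also unnecessary. The paper works directly with the density factor $P(x')\,P(y,z)$, so $R_2$ is simply the derivative of a product of two marginals. Perturbing $P(x')$ along $\delta_X - P$ gives $\int L(Y,\hat{y}(x',Z))P(x')\,dx' - \Psi^{\pi L}_0(P)$; perturbing $P(y,z)$ along $\delta_{Y,Z}-P$ gives $\int L(y,\hat{y}(X,z))P(y,z)\,dy\,dz - \Psi^{\pi L}_0(P)$; summing yields exactly the last three terms of the claimed formula with the coefficient $2$ on $\Psi^{\pi L}_0(P)$. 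There is no $P(x,z)$ score contribution and hence no cancellation to verify---the ``main obstacle'' you anticipate is self-inflicted by the erroneous rewriting. Drop the second form and differentiate $P(x')P(y,z)$ directly, and your proof collapses to the paper's two-line argument.
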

\begin{proof}
Using a similar approach as Lemma~\ref{lemma: reflosseif}, we have
\begin{align*}
\psi^{\pi L}(X,Y,Z) &  = \int \psi^{\hat{y}(x',z)(X,Z)}L'(y,\hat{y}(x',z))P(x')P(y,z) dx'dydz \\ & \hspace{2cm} + \int L(Y,\hat{y}(x',Z))P(x')dx' + \int L(y,\hat{y}(X,z))P(y,z)dydz - 2 \Psi^{\pi L}(P) \\
& = (Y - \hat{y}(X,Z))\int L'(y,\hat{y}(X,Z)) \frac{P(X)P(y,Z)}{P(X,Z)} dy \\ & \hspace{2cm} + \int L(Y,\hat{y}(x',Z))P(x')dx' + \int L(y,\hat{y}(X,z))P(y,z)dydz - 2 \Psi^{\pi L}(P)
\end{align*}
\end{proof}
\subsection{Additional Assumptions}
\setcounter{assumption}{4}

We note that Assumptions \ref{assumption: donsker} below and Assumption \ref{assumption: samplesplitting} essentially play the same role in eliminating the empirical process term. In Assumption \ref{assumption: samplesplitting}, we used an additional share of data \(I_3\) to ensure the independence of the efficient influence function and the final estimator. Though this is different from the classical approach described by \cite{10.1111/ectj.12097}, we note that our method is iterative, whereas theirs is a one-step method. And Assumption~\ref{assumption: donsker} is a replicate Assumption A4 of \cite{van2011cross}.   Here we define $\overrightarrow{\varepsilon}_n^{k_0}$ to be the sequence of $\varepsilon^j_n$, padded with zeros if needed to create a $k_0$ vector. 

\begin{assumption}[Donsker Condition; A2 of Theorem 5 in \cite{van2011cross}]\label{assumption: donsker}
Let \(\varepsilon_{k_0}^*\) be the limit of \(\overrightarrow{\epsilon}_n^{k_0}\), that is, \(\overrightarrow{\epsilon}_n^{k_0} \converge[P] \varepsilon_{k_0}^*\). Condition on \(P_{n,I_2} \) and consider a class of measurable functions \(f\) estimated on \(I_1\):

\[
\mathcal{F}(P_{n,I_2}) \equiv \left\{ \psi(\hat{f}_{I_1}, P_{\varepsilon}) 
- \psi(f^*, P_{\varepsilon_{k_0^*}}): \varepsilon\right\},
 \]
where the set over which ${\epsilon}$ varies is chosen so that it is a subset of $\mathbb{R}^{k_0}$ and contains $\overrightarrow{\epsilon}_n^{k_0}$ with probability tending to 1. 
Define the subclasses
\[
\mathcal{F}_{\delta_n}(P_{n,I_2}) \equiv \left\{ f_\epsilon \in \mathcal{F}(P_{n,I_2}) : \|\epsilon - \varepsilon_{k_0}^*\| < \delta_n \right\}.
\]

If for deterministic sequence \( \delta_n \to 0 \), we have
\[
E\left\{ \text{Entro}(\mathcal{F}_{\delta_n}(P_{n,I_2})) \sqrt{P^* {F}(\delta_n, P_{n,I_2})^2} \right\} \to 0 \quad \text{as} \quad n \to \infty,
\]
where \( {F}(\delta_n, P_{n,I_2}) \) is the envelope of \( \mathcal{F}_{\delta_n}(P_{n,I_2}) \) and \(\text{Entro}(\mathcal{F}_{\delta_n}(P_{n,I_2}))\) is the entropy of \(\mathcal{F}_{\delta_n}(P_{n,I_2})\).

\end{assumption}
This condition is the same as A2 given in \cite{van2011cross}, to which we refer the reader for further details.

\subsection{Proof of Theorem~\ref{thm: mainthm}}
\begin{proof}
If Assumptions 1,2,3 and 4 are satisfied, this is exactly the same result as Theorem 5 of \cite{van2011cross}, and so will be the proof.

If Assumptions 1,2,3 and 5 are satisfied, the only thing we need to do is create a similar lemma as Lemma 2 of \cite{van2011cross}. We can start by considering the empirical process term, that is 
\begin{align*}
(\pr_{n, I_3} - \pr)\left(\psi(\hat{f}_{I_1}, P_{\overrightarrow{\varepsilon}^{k_n}_n} ) - \psi({f}^*, P^*) \right) 
\end{align*}
For the conditional variance of the term on \(I_3\), we have:
\begin{align*}
var\left( (\pr_{n, I_3} - \pr)\left(\psi(\hat{f}_{I_1}, P_{\overrightarrow{\varepsilon}^{k_n}_n} ) - \psi({f}^*, P^*) \right) \right) 
&= var\left( \pr_{n, I_3} \left(\psi(\hat{f}_{I_1}, P_{\overrightarrow{\varepsilon}^{k_n}_n} ) - \psi({f}^*, P^*) \right) \right) \\
&= \frac{1}{n} var(\psi(\hat{f}_{I_1}, P_{\overrightarrow{\varepsilon}^{k_n}_n} ) - \psi({f}^*, P^*) )\\
& \leq \frac{1}{n} \|\psi(\hat{f}_{I_1}, P_{\overrightarrow{\varepsilon}^{k_n}_n} ) - \psi({f}^*, P^*) \|\\
& = o_P(1/n)
\end{align*}
Then, by Chebyshev's inequality, we have:
\begin{align*}
 (\pr_{n, I_3} - \pr)\left(\psi(\hat{f}_{I_1}, P_{\overrightarrow{\varepsilon}^{k_n}_n} ) - \psi({f}^*, P^*) \right) = O_p \left(\sqrt{\frac{1}{n} \|\psi(\hat{f}_{I_1}, P_{\overrightarrow{\varepsilon}^{k_n}_n} ) - \psi({f}^*, P^*) \|}\right)
\end{align*}

We can then obtain the desired result by Assumption \ref{assumption: samplesplitting}, that is:
\begin{align*}
(\pr_{n, I_3} - \pr)\left(\psi(\hat{f}_{I_1}, P_{\overrightarrow{\varepsilon}^{k_n}_n} ) - \psi({f}^*, P^*) \right)  = o_P(1/\sqrt{n})
\end{align*}
The rest of the proof follows in the same manner as Theorem 5 of \cite{van2011cross}.
\end{proof}

\end{document}